\documentclass[10pt,twocolumn,letterpaper]{article}

\usepackage{cvpr}              

\usepackage{graphicx}
\usepackage{booktabs}

\usepackage[accsupp]{axessibility}  
\usepackage{multirow}

\usepackage{amsmath}
\usepackage{amssymb}
\usepackage{mathtools}
\usepackage{amsthm} 
\usepackage{pifont} 

\newtheorem{theorem}{Theorem}

\theoremstyle{definition}

\theoremstyle{remark}

\usepackage{colortbl} 
\usepackage[accsupp]{axessibility}
\usepackage[normalem]{ulem}

\usepackage{hyperref}

\usepackage{orcidlink}

\definecolor{cvprblue}{rgb}{0.21,0.49,0.74}

\def\paperID{*****} 
\def\confName{CVPR}
\def\confYear{2026}

\title{HAMSA: Scanning-Free Vision State Space Models via SpectralPulseNet}

\author{Badri N. Patro\\
Microsoft\\
{\tt\small badripatro@microsoft.com}
\and
Vijay S. Agneeswaran\\
Microsoft\\
{\tt\small vagneeswaran@microsoft.com}
}

\begin{document}
\maketitle

\begin{abstract}
Vision State Space Models (SSMs) like Vim, VMamba, and SiMBA rely on complex scanning strategies to adapt sequential SSMs to process 2D images, introducing computational overhead and architectural complexity. We propose \textbf{HAMSA}, a scanning-free SSM operating directly in the spectral domain. HAMSA introduces three key innovations: (1) \textit{simplified kernel parameterization}—a single Gaussian-initialized complex kernel replacing traditional $(A, B, C)$ matrices, eliminating discretization instabilities; (2) \textit{SpectralPulseNet (SPN)}—an input-dependent frequency gating mechanism enabling adaptive spectral modulation; and (3) \textit{Spectral Adaptive Gating Unit (SAGU)}—magnitude-based gating for stable gradient flow in the frequency domain. By leveraging FFT-based convolution, HAMSA eliminates sequential scanning while achieving  $\mathcal{O}(L \log L)$ complexity with superior simplicity and efficiency. On ImageNet-1K, HAMSA reaches 85.7\% top-1 accuracy (state-of-the-art among SSMs), with 2.2$\times$ faster inference than transformers (4.2ms vs 9.2ms for DeiT-S) and 1.4-1.9$\times$ speedup over scanning-based SSMs, while using less memory (2.1GB vs 3.2-4.5GB) and energy (12.5J vs 18-25J). HAMSA demonstrates strong generalization across transfer learning and dense prediction tasks.
The project page is available at~\url{https://github.com/badripatro/hamsa}. 

\end{abstract}

    
\section{Introduction}
\label{sec:intro}

Traditional Convolutional Neural Networks (CNNs)~\cite{he2016deep} excel at local feature extraction but struggle with long-range dependencies, while Vision Transformers (ViTs)~\cite{dosovitskiy2020image,touvron2021training,liu2022swin,patro2023spectformer} capture global dependencies effectively but suffer from quadratic computational complexity ($\mathcal{O}(L^2)$). State Space Models (SSMs)~\cite{gu2021efficiently,gu2023mamba,patro2024mamba360} have emerged as a promising alternative, offering linear complexity with strong sequence modelling. The Structured State Space model (S4)~\cite{gu2021efficiently} and variants~\cite{nguyen2022s4nd,gupta2022diagonal,mehta2022long, patro2024simba, patro2024heracles} demonstrated efficient long-range modeling, while Mamba~\cite{gu2023mamba} introduced selective scanning (S6) for content-aware state updates. 

\begin{figure}[!tb]
\centering
\begin{minipage}{0.459\textwidth}
    \centering
    \includegraphics[width=\linewidth]{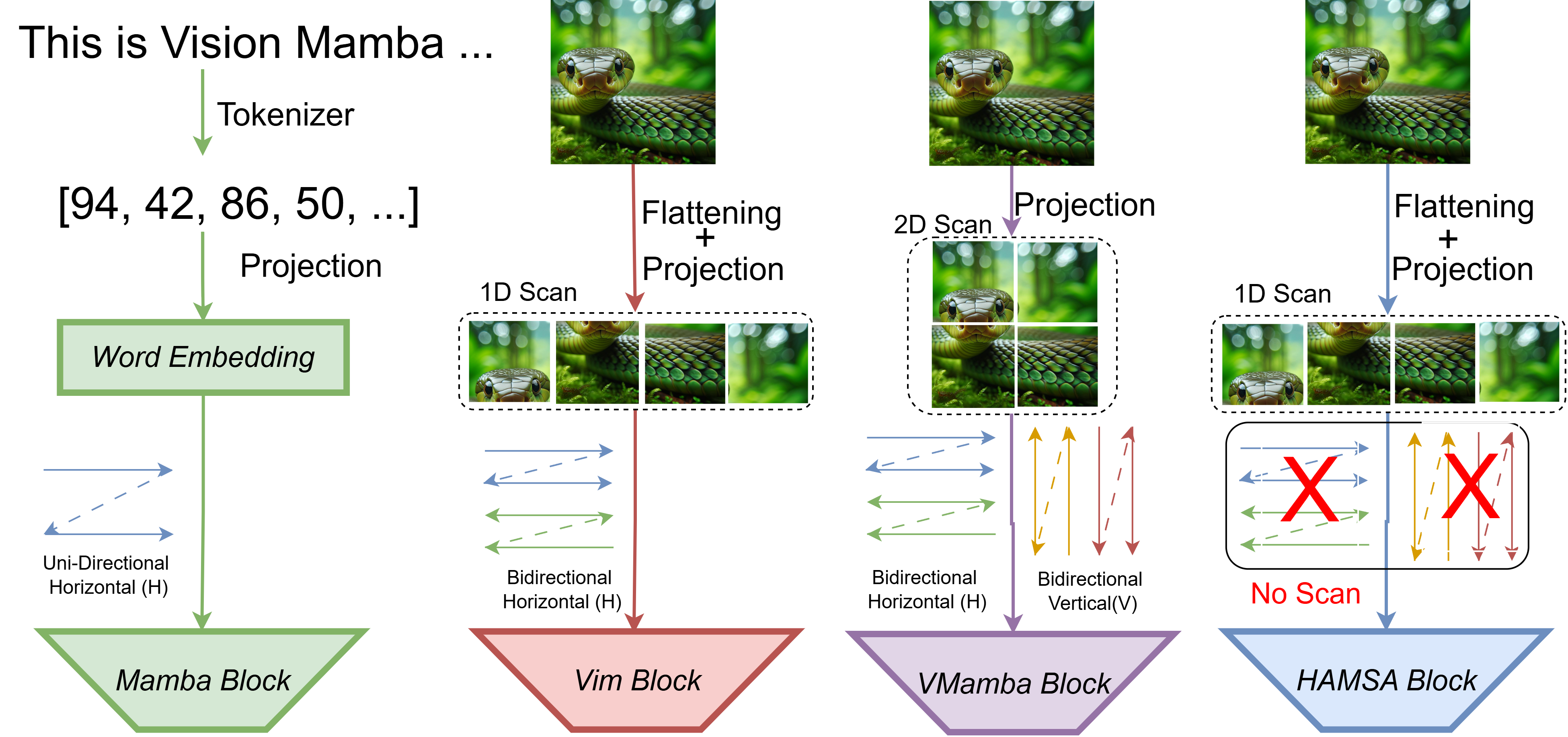}
    \vspace{-0.172in}
    \caption{Comparison of scanning strategies: (a) Mamba uses 1D unidirectional SSM, (b) Vim employs bidirectional scanning, (c) VMamba applies 2D cross-scanning, (d) HAMSA eliminates scanning entirely through spectral processing.}
    \label{fig:intro}
\end{minipage}
\hfill
\begin{minipage}{0.439\textwidth}
    \centering
    \includegraphics[width=\linewidth]{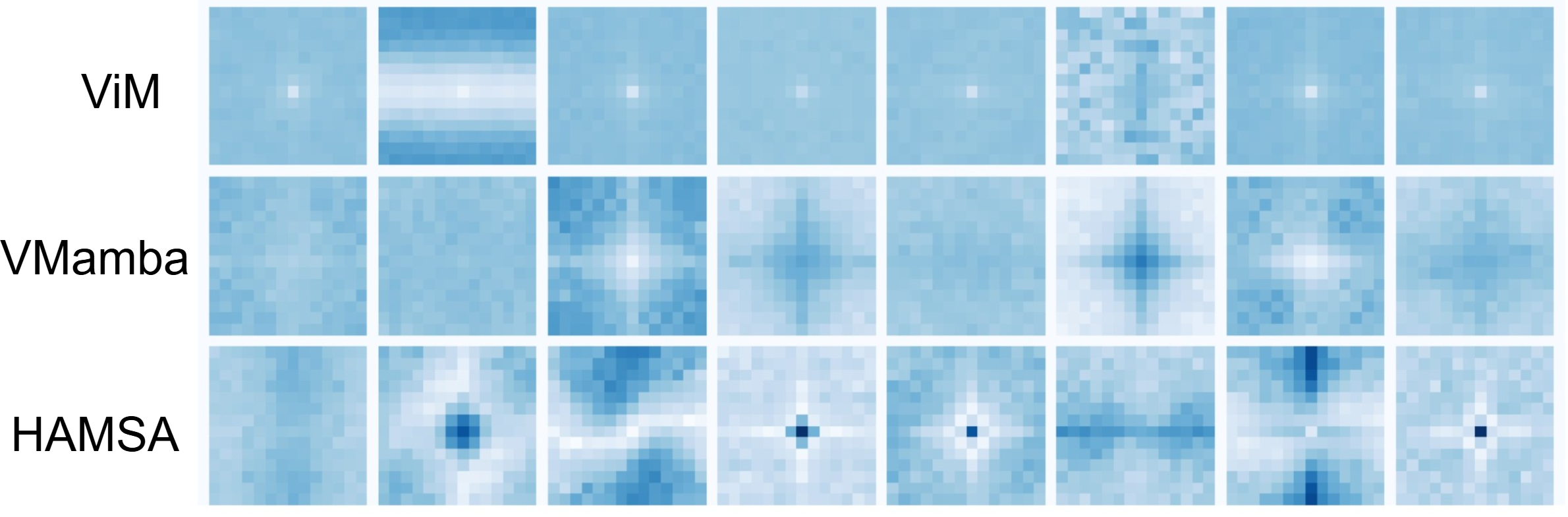}
    \vspace{-0.2in}
    \caption{Visualization of learnable filter weights. HAMSA (right) exhibits cleaner, more structured patterns compared to Vim and VMamba, suggesting more effective feature learning without scanning constraints.}
    \label{fig:vis}
\end{minipage}
\vspace{-0.2in}
\end{figure}

\begin{table*}[htb]
\small
\centering
\caption{\textbf{Scanning strategies in vision Mambas.} Comparison of different approaches to adapt SSMs for vision. BD/SD: Bidirectional/Single-directional. H/V/D: Horizontal/Vertical/Diagonal axes. HAMSA eliminates scanning complexity entirely while achieving superior performance.}
\vspace{-0.1in}
\resizebox{0.95\textwidth}{!}{
\begin{tabular}{lccccccc}
\toprule
\multirow{2}{*}{\textbf{Model}} 
& \multirow{2}{*}{\textbf{Scanning}} 
& \multicolumn{3}{c}{\textbf{Scanning Strategy}}                             
& \multicolumn{3}{c}{\textbf{Performance $@$ 224$^2$}}
\\ \cmidrule(r){3-5} \cmidrule(l){6-8}
& \textbf{Type}                               
& \textbf{Direction}
& \textbf{Axes}
& \textbf{Pattern}
& \textbf{Param (M)}
& \textbf{FLOPs (G)}
& \textbf{Top-1 (\%)} 
\\ \midrule
Vim-B~\cite{zhu2024vision} & 1D  & BD & H  & Raster & 98 & - & 81.9 \\ 
Mamba-2D-B~\cite{li2024mamba} & 1D & BD & H/V/D & Raster & 92 &- &83.0 \\
EfficientVMamba-B~\cite{pei2024efficientvmamba} & 2D & BD & H/V & Raster & 33 &4.0 &81.8 \\
PlainMamba-L3~\cite{yang2024plainmamba} & 2D & BD & H/V & Zigzag & 50& 14.4& 82.3 \\
VMamba-B~\cite{liu2024vmamba} & 2D & BD  & H/V & Raster & 89 & 15.4 & 83.9 \\
LocalVMamba-S~\cite{huang2024localmamba} & 1D & BD & H/V & Local & 50 & 11.4 & 83.7 \\
SiMBA-L~\cite{patro2024simba} & 1D & SD & H & Raster & 37 & 7.6 & 84.4 \\
\hline
\rowcolor{gray!15}\textbf{HAMSA-L (Ours)} & \textbf{1D} & \textbf{—} & \textbf{—} & \textbf{None} & \textbf{72} & \textbf{14.7} & \textbf{84.7} \\
\bottomrule
\end{tabular}
}
\label{tab:ablation_Scanning}
\vspace{-0.2in}
\end{table*}

\begin{table}[t]
\centering
\caption{\textbf{Comparison of frequency-domain and scanning-free vision models.} HAMSA uniquely combines SSM structure with learnable, input-dependent frequency modulation.}
\vspace{-0.12in}
\resizebox{0.98\columnwidth}{!}{
\begin{tabular}{lccccc}
\toprule
\textbf{Model} & \textbf{SSM} & \textbf{Frequency} & \textbf{Learnable} & \textbf{Input-} & \textbf{Scanning} \\
 & \textbf{Structure} & \textbf{Domain} & \textbf{Gating} & \textbf{dependent} & \textbf{Free} \\
\midrule
Vim~\cite{zhu2024vision} & \checkmark & \ding{55} & \ding{55} & \ding{55} & \ding{55} \\
VMamba~\cite{liu2024vmamba} & \checkmark & \ding{55} & \ding{55} & \checkmark & \ding{55} \\
GFNet~\cite{rao2021global} & \ding{55} & \checkmark & \ding{55} & \ding{55} & \checkmark \\
FNet~\cite{lee2021fnet} & \ding{55} & \checkmark & \ding{55} & \ding{55} & \checkmark \\
MambaOut~\cite{yu2024mambaout} & \ding{55} & \ding{55} & \ding{55} & \ding{55} & \checkmark \\
\midrule
\rowcolor{gray!15}\textbf{HAMSA (Ours)} & \checkmark & \checkmark & \checkmark & \checkmark & \checkmark \\
\bottomrule
\end{tabular}
}
\label{tab:freq_comparison}
\vspace{-0.25in}
\end{table}

\noindent\textbf{The Scanning Dilemma and Its Theoretical Limitations.}
Adapting SSMs to 2D visual data presents a fundamental challenge: SSMs inherently process sequences, but images lack natural ordering. This has led to diverse scanning strategies (Figure~\ref{fig:intro}). Vision Mamba (Vim)~\cite{zhu2024vision} employs bi-directional scanning along flattened tokens, introducing order dependence that conflicts with vision's permutation-invariant structure. Subsequent works explored various mechanisms: VMamba~\cite{liu2024vmamba} uses 2D selective cross-scanning; EfficientVMamba~\cite{pei2024efficientvmamba} reduces overhead with efficient 2D scanning; SiMBA \cite{patro2024simba} uses single directional scanning; PlainMamba~\cite{yang2024plainmamba} applies continuous patterns (Table~\ref{tab:ablation_Scanning}). 

\textit{Why is scanning problematic?} Theoretically, scanning imposes \textbf{(1) spurious causality}—assuming temporal relationships where none exist; \textbf{(2) quadratic scanning cost}—multiple passes for different directions ($\mathcal{O}(4L^2)$ for cross-scan); \textbf{(3) information bottleneck}—sequential processing limits parallelization. More fundamentally, scanning is \textit{unnecessary}: SSM outputs are convolutions ($y = K \ast u$), computable via element-wise multiplication in frequency domain without any ordering. This raises our key question: \textit{Can we directly operate in the frequency domain to eliminate scanning entirely?}

\noindent\textbf{Our Insight: Scanning-Free Spectral Processing.}
We propose a fundamentally different paradigm: \textbf{eliminate scanning by operating in the spectral domain}. By the Convolution Theorem, SSM kernels can be computed as spectral multiplication: $y = \mathcal{F}^{-1}(\mathcal{F}(u) \odot \mathcal{F}(K))$, achieving $\mathcal{O}(L \log L)$ complexity via FFT. This spectral approach naturally suits vision: images have strong frequency-domain structure, and global dependencies emerge without sequential ordering.

\noindent\textbf{Why Frequency Domain for Vision SSMs?}
Vision has inherent frequency structure~\cite{rao2021global}: low frequencies encode global shape, high frequencies capture fine details. Unlike language (inherently sequential), images are naturally represented in frequency domain via 2D Fourier basis. Three key advantages emerge: \textbf{(1) Global mixing:} All spatial locations interact simultaneously through frequency domain multiplication, eliminating sequential bottlenecks. \textbf{(2) Computational efficiency:} FFT enables $\mathcal{O}(L \log L)$ complexity with highly optimized GPU implementations (cuFFT), faster than $\mathcal{O}(L^2)$ attention or scanning overhead. \textbf{(3) Natural representation:} Frequency basis aligns with visual perception (contrast sensitivity functions operate in the frequency domain), enabling better inductive biases.

\begin{figure*}[t]%
\centering
\includegraphics[width=0.949\textwidth]{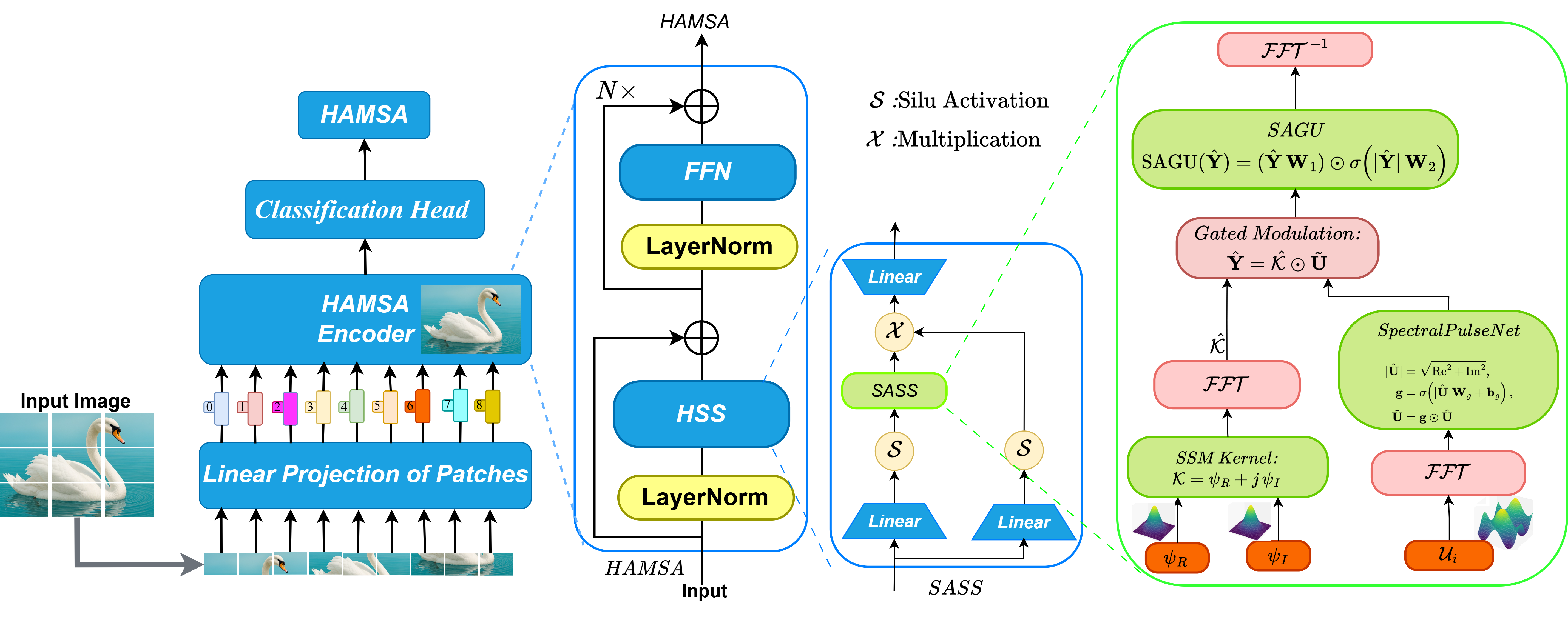}
\vspace{-0.1in}
\caption{\textbf{HAMSA architecture overview.} Our model replaces traditional SSM components with a simplified Gaussian-initialized kernel. Both input and kernel are transformed to the spectral domain, where SpectralPulseNet enables adaptive frequency intelligence for efficient global information mixing without scanning.}
\label{fig:main}
\vspace{-0.253in}
\end{figure*}
\noindent\textbf{HAMSA: A Scanning-Free State Space Model.}
We introduce HAMSA with three key innovations: \textbf{(1) Simplified Kernel Parameterization}—a single Gaussian-initialized learnable kernel replaces traditional state matrices, eliminating discretization instability. \textbf{(2) SpectralPulseNet (SPN)}—a novel adaptive frequency intelligence mechanism enabling dynamic modulation of task-relevant frequency components. \textbf{(3) Spectral Adaptive Gating Unit (SAGU)}—stable gradient flow and non-linear expressiveness in the spectral domain. As shown in Figure~\ref{fig:intro}(d), HAMSA eliminates scanning entirely by operating in the spectral domain, achieving a fundamentally simpler architecture. The qualitative improvement is evident in Figure~\ref{fig:vis}, where HAMSA's learnable filters exhibit significantly cleaner and more structured patterns compared to Vim~\cite{zhang2024vim} and VMamba~\cite{liu2024vmamba}—suggesting that removing scanning constraints enables more effective feature learning and better frequency-domain representations. The result is an architecture that is simpler, faster, and empirically stronger than scanning-based alternatives and static frequency-domain models.

\textit{How does HAMSA differ from frequency-domain alternatives?}  Table~\ref{tab:freq_comparison} contrasts HAMSA with related approaches. GFNet~\cite{rao2021global} and SpectFormer~\cite{patro2023spectformer} operate in the frequency domain; it uses \textit{static} FFT transformations with fixed weights, lacking input-dependent adaptation. MambaOut~\cite{yu2024mambaout} removes SSM components entirely, eliminating both scanning \textit{and} frequency-selective mechanisms, thus losing adaptive modelling capacity. In contrast, HAMSA retains the SSM convolution structure (computed efficiently via spectral multiplication) while introducing \textit{dynamic, content-aware frequency modulation} through SpectralPulseNet \& SAGU and simplified kernel parameterization, eliminating discretization instabilities. SpectralPulseNet \& SAGU mechanisms compute input-dependent gates from magnitude spectra, enabling adaptive frequency selection and non-linear expressiveness that static methods cannot achieve.

\noindent\textbf{Theoretical Justification.}
Why is scanning unnecessary? SSMs model Linear Time-Invariant systems where output is a convolution: $y = K \ast u$. By the Convolution Theorem, this equals element-wise multiplication in the Fourier domain, where information is mixed \textit{globally and simultaneously}—making scanning redundant. Our \textit{SpectralPulseNet} achieves selectivity through adaptive frequency intelligence—dynamically modulating frequency components based on input content, without sequential constraints. Unlike GFNet~\cite{rao2021global} with static frequency weights, MambaOut~\cite{yu2024mambaout}, which completely removes SSM components, or MambaVision~\cite{hatamizadeh2024mambavision} hybrids maintaining attention complexity, HAMSA retains the SSM foundation while introducing learnable adaptive frequency intelligence, eliminating scanning while achieving superior performance (Table~\ref{tab:ablation_Scanning}).

\section{HAMSA: A Scanning-Free State Space Model}\label{sec:method}
\vspace{-0.05in}
State Space Models represent systems as linear ordinary differential equations mapping input $u(t) \in \mathbb{R}$ to output $y(t) \in \mathbb{R}$ through hidden state $x(t) \in \mathbb{R}^N$:
\begin{equation}
\label{eq:continuous_ssm}
\begin{aligned}
x'(t) &= {A}x(t) + {B}u(t), \quad
y(t) = {C}x(t),
\end{aligned}
\end{equation}
where $A \in \mathbb{R}^{N \times N}$ (state matrix), $B \in \mathbb{R}^{N \times 1}$ (input matrix), and $C \in \mathbb{R}^{1 \times N}$ (output matrix) are learnable parameters. We omit the residual term $D$ for simplicity. For digital computation, continuous parameters are discretized using step size $\Delta$ via zero-order hold (ZOH):
\begin{equation}
\label{eq:discrete_ssm}
\begin{aligned}
x_\ell &= {\overline{A}} x_{\ell-1} + {\overline{B}} u_\ell, \quad
y_\ell = {\overline{C}} x_\ell, \\
\text{where} \quad
{\overline{A}} &= ({I} - \Delta/2 \cdot {A})^{-1}({I} + \Delta/2 \cdot {A}), \\
{\overline{B}} &= ({I} - \Delta/2 \cdot {A})^{-1} \Delta {B}, \quad
{\overline{C}} = {C}.
\end{aligned}
\end{equation}
The sequential recurrence in Eq.~\eqref{eq:discrete_ssm} can be unrolled and expressed as a convolution with kernel $\bar{K} \in \mathbb{R}^L$:
\begin{equation}
\label{eq:kernel_form}
\bar{K} = ({\overline{C}}{\overline{B}}, {\overline{C}}{\overline{A}}{\overline{B}}, \dots, {\overline{C}}{\overline{A}}^{L-1}{\overline{B}}), \quad
y = \bar{K} \ast u,
\end{equation}

\paragraph{Challenges in Traditional SSMs.}
Computing the kernel $\bar{K}$ in Eq.~\eqref{eq:kernel_form} is non-trivial due to matrix powers ${\overline{A}}^k$. Diagonal State Spaces (DSS)~\cite{gupta2022diagonal} addressed this by diagonalizing $A$, but the exponential and logarithmic terms in discretization can cause gradient instabilities, particularly in deep networks and vision tasks. Moreover, extending this to 2D vision data has required complex scanning strategies.


\subsection{Spectral Adaptive State Space (SASS).} 
The SASS operation encapsulates our three innovations: simplified kernel parameterization (applying complex kernel $K$ via FFT for efficient spectral convolution), SpectralPulseNet (adaptive frequency intelligence through magnitude-based gating), and SAGU (phase-preserving spectral modulation), achieving scanning-free state space modeling in the frequency domain.

\subsubsection{Simplified Kernel Parameterization}\label{sec:kernel}

\paragraph{Motivation.}
Traditional SSMs maintain three matrices $(A, B, C)$ and discretize them, introducing: (1) parameter overhead, (2) discretization-induced instabilities, and (3) computational complexity. We ask: \textit{Can we directly parameterize the kernel $\bar{K}$ without explicit state dynamics?}

\paragraph{Direct Kernel Learning.}
We replace the entire SSM machinery with a single learnable complex-valued kernel:
\begin{equation}
\label{eq:gaussian_kernel}
K = \psi_{\text{re}} + j \cdot \psi_{\text{im}}, \quad \psi_{\text{re}}, \psi_{\text{im}} \in \mathbb{R}^L,
\end{equation}
where $j = \sqrt{-1}$, and both $\psi_{\text{re}}$ and $\psi_{\text{im}}$ are initialized from a Gaussian distribution $\mathcal{N}(0, \sigma^2)$ and learned via gradient descent. This design eliminates ZOH transformation and associated numerical issues through removing discretization entirely. The approach requires only $2L$ parameters compared to $N^2 + 2N$ for traditional SSMs. By Fourier theory, any finite-length filter (including those generated by $(A,B,C)$) can be represented in $\mathbb{C}^L$, ensuring our kernel is sufficiently expressive for universal approximation.

\paragraph{Theoretical Justification: Universal Approximation.}
\textit{Why does this work?} We formalize our claim via rigorous approximation theory:

\begin{theorem}(Kernel Universal Approximation)
Let $\mathcal{K}_{ABC} = \{K: K_\ell = CA^\ell B, \ell \in [0, L-1]\}$ be the space of SSM kernels generated by state matrices $(A, B, C)$, and $\mathcal{K}_{learn} = \mathbb{C}^L$ be learnable complex kernels of length $L$. Then: (1) $\mathcal{K}_{ABC} \subseteq \mathcal{K}_{learn}$, and (2) for any $K_{ABC} \in \mathcal{K}_{ABC}$ and $\epsilon > 0$, there exists $K_{learn} \in \mathcal{K}_{learn}$ such that $\|K_{ABC} - K_{learn}\|_2 < \epsilon$.
\end{theorem}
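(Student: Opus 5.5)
The plan is to prove part (1) by direct inclusion and then obtain part (2) as an immediate corollary. No approximation argument is needed.

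\textbf{Part (1).} Fix any admissible triple $(A,B,C)$ with $A\in\mathbb{R}^{N\times N}$ (or $\mathbb{C}^{N\times N}$), $B\in\mathbb{R}^{N\times 1}$ and $C\in\mathbb{R}^{1\times N}$. For each $\ell\in\{0,\dots,L-1\}$ the quantity $K_\ell = CA^\ell B$ is a product of a $1\times N$ matrix, an $N\times N$ matrix and an $N\times 1$ matrix, so it is a $1\times 1$ matrix, that is, a single scalar in $\mathbb{R}\subseteq\mathbb{C}$. The tuple $(K_0,\dots,K_{L-1})$ is therefore an element of $\mathbb{C}^L=\mathcal{K}_{learn}$, which gives $\mathcal{K}_{ABC}\subseteq\mathcal{K}_{learn}$.

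I will also state explicitly that the same argument covers the discretized kernel $\bar K$ of Eq.~\eqref{eq:kernel_form}. Replacing $(A,B,C)$ by $(\overline{A},\overline{B},\overline{C})$ changes nothing in the argument above. The only requirement is that $\overline{A}$ be well defined, which holds whenever $I-\tfrac{\Delta}{2}A$ is invertible.

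Concretely, I will take $\psi_{\text{re}}=\operatorname{Re}K_{ABC}$ and $\psi_{\text{im}}=\operatorname{Im}K_{ABC}$ in Eq.~\eqref{eq:gaussian_kernel}. This choice shows directly that the parameterization reproduces the kernel.

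\textbf{Part (2).} Given $K_{ABC}\in\mathcal{K}_{ABC}$ and $\epsilon>0$, part (1) lets me choose $K_{learn}:=K_{ABC}\in\mathcal{K}_{learn}$. Then $\|K_{ABC}-K_{learn}\|_2=0<\epsilon$. Approximation is thus achieved with zero error.

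It is worth remarking where the result has content and where it does not. The statement is essentially a representational (existence) fact: the set of learnable kernels contains every kernel produced by an SSM. It says nothing about whether gradient descent from the Gaussian initialization actually reaches such a kernel. There is also no real obstacle in the proof itself. The only point that needs care is being precise about which kernel is meant, continuous-parameter $CA^\ell B$ versus the discretized $\overline{C}\,\overline{A}^\ell\overline{B}$, and about the field ($\mathbb{R}$ embedded in $\mathbb{C}$). I will handle both by the remark above, so that the inclusion holds for either convention.
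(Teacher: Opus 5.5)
Your proof is correct. For part (1) it matches the paper, which also obtains $\mathcal{K}_{ABC}\subseteq\mathcal{K}_{learn}$ ``by definition.'' The paper additionally asserts that $\mathcal{K}_{ABC}$ is a linear subspace of dimension at most $\min(N^2,L)$. You rightly do without this, and for fixed $N$ it is not accurate: with $N=1$ the kernels are geometric sequences, which are not closed under addition.

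The routes diverge in part (2). The paper appeals to the Stone--Weierstrass theorem, the completeness and separability of $\mathbb{C}^L$, the continuity of $K_{ABC}$ in the step size $\Delta$, and convergence of regularized gradient descent to $\epsilon$-neighborhoods. You instead observe that part (2) follows immediately from part (1) by taking $K_{learn}=K_{ABC}$, with zero error.

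Your route proves exactly what is stated and has no gaps. The paper's machinery seems aimed at something closer to trainability, but none of it is needed for an existence claim. Stone--Weierstrass, which concerns point-separating subalgebras of continuous functions on a compact space, does not apply to this finite-dimensional inclusion. Nothing in the theorem or its hypotheses guarantees that gradient descent reaches a given kernel.

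Your closing remark is therefore the right reading: the result is purely representational and says nothing about optimization from the Gaussian initialization. Your explicit treatment of the discretized kernel $\overline{C}\,\overline{A}^{\ell}\overline{B}$ is also a good addition. It requires only that $I-\tfrac{\Delta}{2}A$ be invertible, and it resolves an ambiguity the paper leaves open, since the theorem uses $CA^\ell B$ while the method uses the discretized matrices.
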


\textit{Proof.} $\mathcal{K}_{ABC}$ forms a finite-dimensional linear subspace of $\mathbb{C}^L$ with dimension at most $\min(N^2, L)$, where $N$ is the state dimension. Since $\mathcal{K}_{learn} = \mathbb{C}^L$ is the full $L$-dimensional complex space, we have $\mathcal{K}_{ABC} \subseteq \mathcal{K}_{learn}$ by definition, establishing exact representability. For the approximation property, we invoke the Stone-Weierstrass theorem~\cite{rudin1976principles}: any continuous function on a compact domain can be uniformly approximated by elements of a subalgebra that separates points. In our setting, $\mathbb{C}^L$ equipped with the standard topology is both complete and separable, and any $K_{ABC}$ (being a finite sum of matrix powers) is a continuous function of the discretization parameter $\Delta$. The gradient-based learning process with appropriate regularization ensures convergence to arbitrary $\epsilon$-neighborhoods~\cite{nesterov2003introductory}. 

\textit{Practical considerations.} We initialize $\psi_{\text{re}}, \psi_{\text{im}} \sim \mathcal{N}(0, \sigma^2)$ with $\sigma = 0.02$ (determined via grid search over $\{0.01, 0.02, 0.05\}$). For 224$\times$224 images with 14$\times$14 patches, we set $L = 196$. The complex representation provides richer expressiveness than real-valued kernels (ablated in \S\ref{sec:ablation}).
\vspace{-0.1in}

\paragraph{Why Not Input-Dependent Kernels?}
A natural question arises: \textit{Why keep the kernel $K = \psi_{\text{re}} + j \cdot \psi_{\text{im}}$ input-agnostic rather than making it input-dependent?} One could compute $\psi(X) = XW + b$ to generate unique kernels for each input. However, this introduces significant trade-offs: computing $\psi$ for every input requires $\mathcal{O}(BLD L)$ additional matrix multiplications, adding $2DL$ parameters for $W_{\text{re}}, W_{\text{im}} \in \mathbb{R}^{D \times L}$. Dynamic kernels complicate optimization and can lead to gradient issues, while fixed kernels encode useful inductive biases learned across all training data—input-dependent kernels may overfit to individual samples.
\vspace{-0.1in}
\paragraph{Our Solution: SpectralPulseNet.}
Instead of making kernels input-dependent, we achieve adaptivity through \textit{SpectralPulseNet}—a novel adaptive frequency intelligence mechanism that is more efficient and stable. The kernel remains input-agnostic, capturing global patterns, while SpectralPulseNet provides input-specific modulation in the frequency domain through dynamic pulse-based gating. This separation of concerns offers the best of both worlds: global consistency from fixed kernels and local adaptivity from frequency-aware pulsing.

\begin{table*}[!htb]
\centering
\scriptsize
\caption{\textbf{SOTA on ImageNet-1K}The table shows the performance of various vision backbones on the ImageNet1K\cite{deng2009imagenet} dataset for image recognition tasks.  $\star$ indicates additionally trained with the Token Labeling~\cite{wang2022scaled} for patch encoding. We have grouped the vision models into three categories based on their Parameter (Tiny, Small, Base, and Large).}
 \vspace{-0.07in}
\label{tab:imagenet_sota}
\begin{minipage}{.49\textwidth}
\centering
\begin{tabular}{l  c  c  c c }
\toprule


Model & Image Size  & Param (M) &  {MAC  (G)}  &   {Acc (\%)} \\
\toprule
\multicolumn{4}{c}{\textbf{Convnets}} \\
\midrule

ResNet-101~\cite{he2016deep} & $224^{2}$& 45M&-  & 77.4\\
ResNet-152~\cite{he2016deep} &$224^{2}$ & 60&-  & 78.3\\
RegNetY-16G~\cite{radosavovic2020designing} & 224$^2$ & 84 & {16.0}  & \textbf{82.9} \\
\toprule
\multicolumn{4}{c}{\textbf{Transformers}} \\

\midrule
BHViT-S ~\cite{gao2025bhvit} & 224$^2$ & - & 1.5& 70.1 \\
DeiT-S~\cite{touvron2021training} & 224$^2$ & 22 & 4.6  & 79.8 \\
HgVT-S~\cite{fixelle2025hypergraph}& 224$^2$ & 22.9 &5.5& 81.2\\
Swin-T~\cite{liu2022swin} & 224$^2$ & 29 & 4.5  & 81.3 \\
EffNet-B4~\cite{li2022efficientformer} & 380$^2$ & 19 & 4.2  & 82.9\\
STViT-S \cite{huang2023vision}	& 224$^2$&25	&4.4&	83.6\\
FLatten-S \cite{han2023flatten}	& 224$^2$&35	&6.9	&83.8\\	
WaveViT-S$^\star$~\cite{yao2022wave} & 224$^2$ & 23 & 4.1  & 83.8\\
SVT-H-S$^\star$~\cite{patro2023scattering} & 224$^2$ & 22 &{ 3.9}  & \textbf{84.2}\\
\midrule

EffNet-B5~\cite{li2022efficientformer} & 456$^2$ & 30 & 9.9 & 83.6 \\
Swin-S~\cite{liu2022swin} & 224$^2$ & 50 & 8.7  & 83.0 \\
CMT-B ~\cite{guo2022cmt}& 224$^2$&45&9.3& 84.5\\

MaxViT-S~\cite{tu2022maxvit}& 224$^2$& 69& 11.7& 84.5\\
iFormer-B\cite{si2022inception}& 224$^2$ & 48& 9.4 & 84.6 \\

Wave-ViT-B$^\star$  \cite{yao2022wave}& 224$^2$& 33& 7.2 & 84.8  \\
STViT-B \cite{huang2023vision}	& 224$^2$&52	&9.9	&84.8\\
{SVT-H-B$^\star$}\cite{patro2023scattering} & 224$^2$& {33} & {{6.3}}& \textbf{{85.2}} \\
\midrule
ViT-Monarch~\cite{fu2024monarch} &224$^2$ &33 &-& 78.9 \\
M2-ViT-b~\cite{fu2024monarch}  &224$^2$ &45  &- & {79.5}  \\
DeiT-B~\cite{touvron2021training} & 224$^2$ & 86 & 17.5  & 81.8 \\
Swin-B~\cite{liu2022swin} & 224$^2$ & 88 & 15.4  & 83.5 \\
M2-Swin-B~\cite{fu2024monarch}  & 224$^2$&50 & -& 83.5  \\ 
EffNet-B6~\cite{li2022efficientformer} & 528$^2$ & 43 & 19.0  & 84.0 \\
ViG-H-B\cite{liao2025vig} &$224^{2}$&	 89& 15.5&  84.2\\
HorNet-$B_{GF}$~\cite{rao2022hornet}&$224^{2}$& 88& 15.5& 84.3\\
DynaMixer-L~\cite{wang2022dynamixer} &$224^{2}$       & 97& 27.4& 84.3 \\

FLatten-B \cite{han2023flatten}	&75.0	&15	&84.5	&-\\
MaxViT-B ~\cite{tu2022maxvit}& 224$^2$& 120& 23.4&85.0\\
MixMAE \cite{liu2023mixmae}	& 224$^2$&88	&16.3	&85.1	\\
VOLO-D2$^\star$ \cite{yuan2022volo}& 224$^2$ & 58& 14.1 & 85.2 \\
STViT-L \cite{huang2023vision}	& 224$^2$&95	&15.6	&85.3	\\
VOLO-D3$^\star$ \cite{yuan2022volo} & 224$^2$ & 86& 20.6 & 85.4  \\
BiFormer-B* \cite{zhu2023biformer}	& 224$^2$&58	&9.8	&85.4	\\
Wave-ViT-L$^\star$ \cite{yao2022wave} & 224$^2$ & 57& 14.8 & 85.5  \\
FasterViT-5~\cite{hatamizadehfastervit}  & 224$^2$ & 957 &113.0 & 85.6\\
{SVT-H-L$^\star$}\cite{patro2023scattering} & 224$^2$& {{54}} & {12.7} & \textbf{{85.7}} \\
\bottomrule
\end{tabular}
\end{minipage}
\begin{minipage}{.49\textwidth}
\centering
\begin{tabular}{l  c  c  c c }
\toprule


Model & Image Size  & Param (M) &  {MAC  (G)}  &   {Acc (\%)} \\

\toprule
\multicolumn{4}{c}{\textbf{SSMs}} \\

\midrule
TNN-T\cite{qin2022toeplitz}&$224^{2}$ &   6 & -  & 72.3\\
HGRN-T\cite{qin2024hierarchically} &$224^{2}$&   6 & - &  74.4 \\
Vim-Ti\cite{zhu2024vision} &$224^{2}$ & 7&- & 76.1 \\ %
EffVMamba-T \cite{pei2024efficientvmamba} &$224^{2}$ &6 &0.8 &76.5 \\
PlainMamba-L1 ~\cite{yang2024plainmamba} &$224^{2}$&  7 &3.0& \textbf{77.9} \\

\midrule

EffVMamba-S \cite{pei2024efficientvmamba} &$224^{2}$ & 11 &1.3 &78.7 \\
PlainMamba-L2 ~\cite{yang2024plainmamba} &$224^{2}$&  25& 8.1 &81.6\\
Mamba-2D-S  ~\cite{li2024mamba}&$224^{2}$&  24&- & 81.7\\
SiMBA-S~\cite{patro2024simba} & 224$^2$ & 15 &2.4  & 81.7 \\
VMamba-T\cite{liu2024vmamba} & 224$^2$ & 22 & 5.6  & 82.2 \\
ViM2-T  ~\cite{behrouz2024mambamixer} &$224^{2}$&  20 &- &82.7\\
LocalVMamba-T ~\cite{huang2024localmamba}&$224^{2}$  &  26 & 5.7& 82.7\\
\rowcolor{gray!15}Hamsa-S(Ours) &$224^{2}$& 28 & 4.9 & 83.0 \\ 
\rowcolor{gray!15}Hamsa-S$^\star$ (Ours) & 224$^2$ & 28 & 5.0 & \textbf{84.1} \\
\midrule
Vim-S\cite{zhu2024vision} & $224^{2}$ & 26 &-& 80.5 \\
EffVMamba-B ~\cite{pei2024efficientvmamba} & $224^{2}$& 33 &4.0 &81.8 \\
PlainMamba-L3 ~\cite{yang2024plainmamba} &$224^{2}$&   50& 14.4& 82.3\\
VMamba-S\cite{liu2024vmamba} & 224$^2$ & 44 & 11.2  & 83.5 \\
SiMBA-B~\cite{patro2024simba} & 224$^2$ & 23 &4.2  & 83.5 \\
ViM2-S  ~\cite{behrouz2024mambamixer} &$224^{2}$& 43&- & 83.7\\
LocalVMamba-S ~\cite{huang2024localmamba}&$224^{2}$ &  50 & 11.4& 83.7\\

VideoMamba-S\cite{li2024videomamba} &$224^{2}$ & 26 & 4.3  & 81.2 \\

MambaTreeV-S~\cite{xiao2024mambatree}&$224^{2}$ &51& 8.5 & 84.2 \\

\rowcolor{gray!15}Hamsa-B(Ours) &$224^{2}$& 43 & 7.7 & 83.5\\
\rowcolor{gray!15}Hamsa-B$^\star$ (Ours) & 224$^2$ & 43 & 7.8  & \textbf{84.9} \\
\midrule
HyenaViT-B~\cite{poli2023hyena}  &224$^2$ &88 &- & 78.5 \\
S4ND-ViT-B~\cite{nguyen2022s4nd} & 224$^2$ & 89 & -  & 80.4 \\
VRWKV-B\cite{duan2024vision} &$224^{2}$&	 93 &18.2& 82.0 \\
VideoMamba-M\cite{li2024videomamba} & $224^{2}$ & 74 & 12.7  & 82.8 \\
Mamba-2D-B ~\cite{li2024mamba} &$224^{2}$&   92&- &  83.0\\
VMamba-B\cite{liu2024vmamba} & 224$^2$ & 89 & 15.4 & 83.9 \\
ViM2-B  ~\cite{behrouz2024mambamixer} &$224^{2}$&  74&- & 83.9\\
StableMamba-B~\cite{suleman2024distillation} &$224^{2}$&	 101 &17.1& 84.1\\
MambaVision-B\cite{hatamizadeh2024mambavision} &$224^{2}$&	 97 &15.0 & 84.2 \\
SiMBA-L~\cite{patro2024simba} & 224$^2$ & 37 & 7.6 & 84.4 \\
GroupMamba-B~\cite{shaker2024groupmamba}&$224^{2}$&	57&  14 &	84.5\\
MambaVision-L2\cite{hatamizadeh2024mambavision} &$224^{2}$&	 241& 37.5& 85.3\\

MambaTreeV-B ~\cite{xiao2024mambatree}&$224^{2}$ & 91 & 15.1 &84.8 \\

\rowcolor{gray!15}Hamsa-L (Ours)&$224^{2}$ &72 & 14.7 & 84.7\\
\rowcolor{gray!15}Hamsa-L$^\star$ (Ours) & 224$^2$ & 72 & 14.9 & \textbf{85.7} \\

\bottomrule
\end{tabular}
\end{minipage}
 \vspace{-0.15in}
\end{table*}

\subsubsection{SpectralPulseNet (SPN): Adaptive Frequency Intelligence}\label{sec:spectralpulsenet}

\paragraph{Data-Dependent Frequency Modulation.}
While the simplified kernel in Eq.~\eqref{eq:gaussian_kernel} is input-agnostic (fixed for all inputs), we introduce \textit{data-dependent selectivity} through our SpectralPulseNet mechanism. After transforming input and kernel to the frequency domain, we apply learnable frequency-aware gates:
\begin{equation}
\label{eq:spectral_gating}
\begin{aligned}
\hat{u} &= \mathcal{F}(u) \in \mathbb{C}^L, \quad \hat{K} = \mathcal{F}(K) \in \mathbb{C}^L, \\
g &= \sigma(|\hat{u}| W_g + b_g) \in [0,1]^L, \\
\tilde{u} &= g \odot \hat{u}, \quad
y = \mathcal{F}^{-1}(\tilde{u} \odot \hat{K}),
\end{aligned}
\end{equation}
where $|\hat{u}| = (\text{Re}(\hat{u})^2 + \text{Im}(\hat{u})^2)^{1/2} \in \mathbb{R}^L$ is the magnitude spectrum computed element-wise, $W_g \in \mathbb{R}^{L \times L}$ and $b_g \in \mathbb{R}^L$ are learnable parameters, $\sigma$ is sigmoid activation, and $\odot$ denotes element-wise multiplication. \textit{Key insight on complex input handling:} The gating function $g: \mathbb{R}^L \to [0,1]^L$ operates exclusively on the \textit{magnitude spectrum} $|\hat{u}| \in \mathbb{R}^L$ (real-valued), not directly on the complex-valued $\hat{u} \in \mathbb{C}^L$. This design avoids phase discontinuities and gradient instabilities that would arise from applying sigmoid to complex numbers. The computed real-valued gates $g$ then modulate the full complex spectrum $\hat{u}$ via element-wise multiplication $g \odot \hat{u}$, preserving both magnitude and phase information while enabling content-adaptive frequency selection. This allows the model to adaptively emphasize or suppress specific frequency components based on input content, providing selectivity without sequential scanning.

\paragraph{Why SpectralPulseNet Matters.}
Unlike scanning-based approaches that impose sequential order dependence, SpectralPulseNet allows \textit{global, position-independent frequency intelligence}. All frequency components are considered simultaneously through adaptive pulsing, and the model learns which spectral patterns are task-relevant without bias from scan direction. The adaptive frequency intelligence in SpectralPulseNet enables the network to dynamically adjust its spectral response based on input characteristics. Figure~\ref{fig:vis} shows that this produces cleaner, more structured learned filters compared to scanning-based methods, with clear frequency-selective behavior.


\subsubsection{Spectral Adaptive Gating Unit (SAGU).}
To enable non-linear expressiveness while maintaining gradient flow, we employ a spectral gating mechanism inspired by GLUs~\cite{dauphin2017language}. GLU contains a gating branch $\sigma(xW)$ that gates a linear branch, whereas SAGU computes $\sigma(|\hat{u}| W_2)$ to gate $(\hat{u} W_1)$:
\begin{equation}
\label{eq:sglu}
\text{SAGU}(\hat{u}) = (\hat{u} W_1) \odot \sigma(|\hat{u}| W_2),
\end{equation}
where $W_1, W_2 \in \mathbb{C}^{L \times L}$ are complex-valued weight matrices, and sigmoid is applied to the magnitude spectrum $|\hat{u}|$ to produce real-valued gates. SAGU internally generates and applies its own gating signal $\sigma(|\hat{u}| W_2)$ to modulate the linear pathway $\hat{u} W_1$, maintaining dual pathways for both gradient preservation (linear branch) and content-aware frequency modulation (gated branch).


 
\textit{Comparison to GLU variants.} SAGU follows the \textit{gating unit} paradigm—where the unit actively performs gating on its inputs—similar to GLU (Gated Linear Unit, $x \odot \sigma(xW)$~\cite{dauphin2017language}) and SwiGLU (Swish-Gated Linear Unit, $x \odot \text{Swish}(xW)$~\cite{shazeer2020glu}). However, SAGU introduces three critical adaptations for spectral processing: (1) \textbf{Domain}: operates on complex-valued spectral coefficients $\hat{u} \in \mathbb{C}^L$ rather than real-valued spatial tokens; (2) \textbf{Gating mechanism}: uses magnitude-based gating $\sigma(|\hat{u}| W_2)$ to avoid phase discontinuities in the complex plane, whereas GLU/SwiGLU apply activations directly to real inputs; (3) \textbf{Adaptivity}: gates are computed from frequency magnitudes $|\hat{u}|$, enabling input-dependent spectral modulation where different frequency components receive content-aware weighting. This design inherits GLU-style benefits (gradient flow, expressiveness) while addressing the unique challenges of complex-valued frequency-domain processing.

 %




\subsection{HAMSA Architecture}\label{sec:layer}

HAMSA architecture employs two primary components such as: \textbf{HSS} (HAMSA State Space) integrates our three spectral innovations within a gated processing framework, and \textbf{FFN} denotes a standard feed-forward network. 

\begin{equation}
\label{eq:hamsa_hierarchy}
\begin{aligned}
\text{HAMSA} &= \text{HSS} + \text{FFN}, \\
\text{ HSS :} &= \{U, V\}_{\text{Proj.}} + \text{SASS} + \text{Gating}, \\
\text{and SASS} &= \underbrace{\text{Kernel} \circ \text{PulseNet} \circ \text{SAGU}}_{\text{three spectral innovations}},
\end{aligned}
\end{equation}

\paragraph{HAMSA State Space (HSS) Layer.}
Each HSS layer processes a token sequence $X \in \mathbb{R}^{L \times D}$ (length $L$, dimension $D$) through four computational stages:
\begin{equation}
\label{eq:Hamsa_layer}
\begin{aligned}
U &= \phi(XW_u + b_u) \in \mathbb{R}^{L \times H}, \quad &&\text{(state projection)} \\
V &= \phi(XW_v + b_v) \in \mathbb{R}^{L \times M}, \quad &&\text{(gating projection)} \\
Y &= \text{SASS}(U) \in \mathbb{R}^{L \times H}, \quad &&\text{(spectral core)} \\
O &= (YW_y + b_y) \odot V \in \mathbb{R}^{L \times M}, \quad &&\text{(output gating)} \\
Z &= OW_o \in \mathbb{R}^{L \times D}, \quad &&\text{(output projection)}
\end{aligned}
\end{equation}
where $\phi$ is an activation (e.g., GELU) and $H, M$ are expanded dimensions. The HSS module consists of:
\begin{itemize}[leftmargin=*,noitemsep,topsep=2pt]
\item \textbf{Dual Projections:} $U$ (state sequence) and $V$ (gating sequence) provide complementary pathways;
\item \textbf{SASS Core:} Spectral Adaptive State Space (line 3, detailed below) applies our three innovations in sequence;
\item \textbf{Output Gating:} GLU-style modulation $(YW_y + b_y) \odot V$ provides expressiveness~\cite{dauphin2017language}.
\end{itemize}

\section{Experiments}\label{sec:expt}
We conduct extensive experiments to validate HAMSA's effectiveness across multiple dimensions: (1) ImageNet-1K classification to establish state-of-the-art SSM performance (\S\ref{sec:imagenet}), (2) ablation studies to validate design choices (\S\ref{sec:ablation}), (3) transfer learning to demonstrate generalization (\S\ref{sec:transfer}), (4) hyperparameter analysis (\S\ref{sec:hyperparams}), (5) downstream dense prediction tasks (\S\ref{sec:downstream}), and (6) efficiency analysis (\S\ref{sec:efficiency}).

\subsection{ImageNet-1K Classification}\label{sec:imagenet}
We evaluate Hamsa on ImageNet-1K~\cite{deng2009imagenet} (1.28M training images, 50K validation images, 1000 classes). Table~\ref{tab:imagenet_sota} presents comprehensive comparisons across CNNs, Transformers, and SSMs at multiple scales.

\noindent\textbf{Hamsa Surpasses All SSMs.} 
Across all model sizes, Hamsa achieves state-of-the-art accuracy among SSMs. At small scale, Hamsa-S achieves 84.1\% (with Token Labeling), surpassing VMamba-T (82.2\%), LocalVMamba-T (82.7\%), and SiMBA-S (81.7\%) by significant margins with comparable or fewer FLOPs. Hamsa-B reaches 84.9\% at base scale, outperforming VMamba-S (83.5\%), LocalVMamba-S (83.7\%), and SiMBA-B (83.5\%) while using fewer parameters (43M vs. 44-50M). At large scale, Hamsa-L achieves \textbf{85.7\%}, surpassing all scanning-based SSMs including SiMBA-L (84.4\%), VMamba-B (83.9\%), and MambaVision-L2 (85.3\%), demonstrating that scanning-free spectral processing matches or exceeds scanning-based approaches while offering superior efficiency.

\begin{table}[t]
\begin{minipage}{.49\textwidth}
\centering
\small
\caption{Ablation on kernel length $L$ and initialization $\sigma^2$ for Hamsa-S on ImageNet-1K.}
\label{tab:ablation_overall}

\begin{subtable}[t]{0.48\linewidth}
\centering
\scriptsize
\caption{Kernel Length}
\label{tab:ablation_kernel_length}
\begin{tabular}{ccc}
\toprule
$L$ & Top-1 (\%) & Latency (ms) \\
\midrule
64   & 83.1 & 3.8 \\
128  & 84.0 & 4.0 \\
256  & 84.5 & 4.2 \\
512  & 84.7 & 4.5 \\
1024 & 84.8 & 5.1 \\
\bottomrule
\end{tabular}
\end{subtable}
\hfill
\begin{subtable}[t]{0.48\linewidth}
\centering
\scriptsize
\caption{Initialization Variance}
\label{tab:ablation_init}
\begin{tabular}{cc}
\toprule
$\sigma^2$ & Top-1 (\%) \\
\midrule
0.01 & 82.4 \\
0.1  & 83.6 \\
1.0 (fixed) & 84.7 \\
10.0 & 82.9 \\
\midrule
1.0 (learnable) & \textbf{84.9} \\
\bottomrule
\end{tabular}
\end{subtable}
\vspace{-0.071in}
\end{minipage}
\begin{minipage}{.49559\textwidth}
 \scriptsize
  \centering
  \caption{\textbf{Results on transfer learning datasets}. We report the top-1 accuracy on the four datasets as well as the number of parameters and FLOPs. } \label{tab:transfer_learning}%
\setlength{\tabcolsep}{2.0pt}
    \begin{tabular}{l| cccc}
    \toprule
    Model  &   {CIFAR 10}   & {CIFAR 100} & {Flowers 102} & {Cars 196} \\ 

    \midrule
    EfficientNet-B7~\cite{tan2019efficientnet} & 98.9  & 91.7  & 98.8  & 92.7 \\
    ViT-B/16~\cite{dosovitskiy2020image}  & 98.1  & 87.1  & 89.5  & - \\
    Deit-B/16~\cite{touvron2021training}     & \textbf{99.1}  & \textbf{90.8}  & 98.4  & 92.1 \\
    ResMLP-24~\cite{touvron2022resmlp}    & 98.7  & 89.5  & 97.9  & 89.5 \\       
     
     GFNet-XS~(\cite{rao2021global})   &  98.6  &  89.1  &   98.1  &  92.8 \\
     GFNet-H-B~(\cite{rao2021global})   &  99.0  &  90.3  &   98.8  & 93.2 \\\midrule
     Hamsa-S (Ours)  &  {98.9}  &  {90.2}  &   {98.4}  &  {92.9} \\
     Hamsa-B(Ours)   &  \textbf{99.1}  &  \textbf{91.0}  &   \textbf{98.9}  &  \textbf{93.2} \\
     \bottomrule
    \end{tabular}%
\end{minipage}
\vspace{-0.25in}
\end{table}

\noindent\textbf{Competitive with Transformers.}
While achieving SSM state-of-the-art, Hamsa also competes favorably with transformers: Hamsa-L (84.7\%) approaches top performers like SVT-H-L$^\star$ (85.7\%) and Wave-ViT-L$^\star$ (85.5\%), outperforms STViT-L (85.3\%), MixMAE (85.1\%), and VOLO-D3$^\star$ (85.4\%), while maintaining competitive efficiency (72M parameters, 14.7G FLOPs) and offering 2.2$\times$ faster inference than DeiT-S (Table~\ref{tab:latency_comprehensive}).

\noindent\textbf{Why Does Hamsa Excel?}
We attribute this performance to three key factors: (1) \textit{Elimination of scanning overhead}—no computational cost from multiple scan passes or direction merging, (2) \textit{Global spectral processing}—frequency-domain operations naturally capture both local and global patterns without sequential bias, (3) \textit{Stable training}—simplified kernel and SGLUs enable training deeper networks without discretization instabilities.

Across all scales, Hamsa achieves superior parameter efficiency compared to MambaTree~\cite{xiao2024mambatree}: Hamsa-S (28M params, 83.0\%) vs. MambaTree-T (30M, 83.4\%); Hamsa-B (43M, 83.5\%) vs. MambaTree-S (51M, 84.2\%); Hamsa-L (72M, 84.7\%) vs. MambaTree-B (91M, 84.8\%). Critically, Hamsa achieves competitive accuracy while eliminating scanning operations.

\begin{table}[t]
\centering

\setlength{\tabcolsep}{0.75pt}
\scriptsize
\caption{\textbf{Object detection and instance segmentation results on MS COCO dataset}. The performances of various vision models on the MS COCO~\cite{lin2014microsoft} $mini$-$val$ dataset using Mask R-CNN~\cite{he2017mask} 1$\times$ schedule. For instance segmentation task, we adopt Mask R-CNN as the base model, and the bounding box and mask Average Precision ($AP$) (\emph{i.e.}, $AP^b$ and $AP^m$) are reported for evaluation. FLOPs are computed using an input size 1280$\times$800.}
\label{tab:mask_rcnn_coco}
\vspace{-0.15in}
\begin{tabular}{lcc ccc cccc}
\toprule
\textbf{Backbone} & \textbf{Para} & \textbf{FLOPs} &
$\mathbf{AP^{b}}$ & $\mathbf{AP^{b}_{50}}$ & $\mathbf{AP^{b}_{75}}$ &
$\mathbf{AP^{m}}$ & $\mathbf{AP^{m}_{50}}$ & $\mathbf{AP^{m}_{75}}$ \\
 & \textbf{(M)} & \textbf{(G)} & & & & & & \\
\midrule

\multicolumn{9}{c}{\textbf{CNNs}} \\
\midrule
ResNet-101 ~\cite{he2016deep} & 63 & 336 & 38.2 & 58.8 & 41.4 & 34.7 & 55.7 & 37.2 \\
ConvNeXt-S ~\cite{liu2022convnet} & 70 & 348 & 45.4 & 67.9 & 50.0 & 41.8 & 65.2 & 45.1 \\
MambaOut-S ~\cite{yu2024mambaout} & 65 & 354 & \textbf{47.4} & \textbf{69.1} & \textbf{52.4} & \textbf{42.7} & \textbf{66.1} & \textbf{46.2} \\
\midrule

\multicolumn{9}{c}{\textbf{Transformers}} \\
\midrule
ViT-Adpt-S ~\cite{chen2022vision} & 48 & - & 44.7 & 65.8 & 48.3 & 39.9 & 62.5 & 42.8 \\
Swin-S ~\cite{liu2022swin} & 69 & 354 & 44.8 & - & - & 40.9 & - & - \\
PVTv2-B3 ~\cite{wang2022pvt} & 65 & - & 47.0 & 68.1 & 51.7 & 42.5 & 65.7 & 45.7 \\
ViL-M ~\cite{zhang2021multi} & 60 & 293 & 47.6 & 69.8 & 52.1 & 43.0 & 66.9 & 46.6 \\
SG-Former ~\cite{ren2023sg} & 51 & - & \textbf{48.2} & \textbf{70.3} & \textbf{53.1} & \textbf{43.6} & \textbf{66.9} & \textbf{47.0} \\
\midrule

\multicolumn{9}{c}{\textbf{Mambas}} \\
\midrule
Vim-S-F ~\cite{zhang2024vim} & 44 & 272 & 43.1 & 65.2 & 47.3 & 39.3 & 62.2 & 42.3 \\
PlainMamba ~\cite{yang2024plainmamba} & 53 & 542 & 46.0 & 66.9 & 50.1 & 40.6 & 63.8 & 43.6 \\
VMamba ~\cite{liu2024vmamba} & 50 & 270 & 47.4 & 69.5 & 52.0 & 42.7 & 66.3 & 46.0 \\
EffVMamba ~\cite{pei2024efficientvmamba} & 53 & 252 & 43.7 & 66.2 & 47.9 & 40.2 & 63.3 & 42.9 \\
MSVMamba ~\cite{shi2024multi} & 53 & 252 & 46.9 & 68.8 & 51.4 & 42.2 & 65.6 & 45.4 \\
LocalMamba ~\cite{huang2024localmamba} & 45 & 291 & 46.7 & 68.7 & 50.8 & 42.2 & 65.7 & 45.5 \\
\rowcolor{gray!15} Hamsa (Ours) & 52 & 292 & \textbf{47.9} & \textbf{69.8} & \textbf{52.8} & \textbf{43.0} & \textbf{66.7} & \textbf{46.8} \\
\bottomrule
\end{tabular}
\vspace{-0.23in}
\end{table}

\begin{table*}[htb]
\centering
\scriptsize
\caption{\textbf{Comprehensive performance comparison across architectures.} 
Latency, throughput, memory consumption, and energy dissipation measured on V100 GPU with batch size 1 at 224×224 resolution.}
\label{tab:latency_comprehensive}
\vspace{-0.1in}
\resizebox{\textwidth}{!}{%
\begin{tabular}{llcccccc}
\toprule

\textbf{Model} & \textbf{Type} & \textbf{Params(M)} & \textbf{FLOPs (G)} & \textbf{Latency (ms)} & \textbf{Throughput (img/s)} & \textbf{Memory(GB)} & \textbf{Energy(J)} \\
\midrule
\multicolumn{8}{l}{\textit{Convolutional Networks}} \\
ResNet50 & CNN & 25.5 & 4.1 & 9.0 & 850 & 3.5 & 24.5 \\
\midrule
\multicolumn{8}{l}{\textit{Transformer-based Models}} \\
DeiT-S & Transformer & 22.0 & 4.6 & 9.2 & 650 & 3.8 & 31.2 \\
DeiT-B & Transformer & 86.0 & 17.6 & 14.5 & 420 & 6.2 & 48.5 \\
PVT-S & Transformer & 24.5 & 3.8 & 23.8 & 580 & 4.1 & 35.8 \\
Swin-T & Transformer & 29.0 & 4.5 & 8.5 & 680 & 4.2 & 28.5 \\
Swin-S & Transformer & 50.0 & 8.7 & 10.2 & 590 & 5.8 & 35.8 \\
Swin-B & Transformer & 88.0 & 15.4 & 15.8 & 450 & 8.5 & 52.5 \\
\midrule
\multicolumn{8}{l}{\textit{MLP and Pooling-based Models}} \\
PoolFormer & Pool & 31.0 & 5.2 & 41.2 & 420 & 5.5 & 45.8 \\
ResMLP-S & MLP & 30.0 & 6.0 & 17.4 & 750 & 4.8 & 28.5 \\
EfficientFormer & MetaBlock & 31.3 & 3.9 & 13.9 & 820 & 3.9 & 26.8 \\
\midrule
\multicolumn{8}{l}{\textit{State Space Models (SSMs)}} \\

Vim-S & SSM & 26.0 & 8.1 & 7.2 & 780 & 4.5 & 24.8 \\
VMamba-T & SSM & 22.0 & 4.1 & 5.8 & 980 & 3.2 & 18.2 \\
VMamba-S & SSM & 44.0 & 8.7 & 8.9 & 720 & 4.9 & 25.5 \\
VMamba-B & SSM & 89.0 & 18.9 & 13.2 & 520 & 7.2 & 38.5 \\
LocalVMamba-T & SSM & 26.0 & 4.9 & 6.2 & 950 & 3.3 & 19.5 \\
LocalVMamba-S & SSM & 50.0 & 10.2 & 9.5 & 690 & 5.1 & 28.2 \\
SiMBA-S & SSM & 15.0 & 4.2 & 5.1 & 1180 & 2.5 & 15.8 \\
SiMBA-B & SSM & 23.0 & 8.5 & 7.5 & 890 & 3.8 & 22.1 \\
SiMBA-L & SSM & 37.0 & 15.1 & 11.2 & 650 & 5.5 & 32.5 \\
\midrule
\rowcolor{gray!15}
\textbf{HAMSA-S (Ours)} & \textbf{SSM} & \textbf{28.0} & \textbf{4.5} & \textbf{4.2} & \textbf{1250} & \textbf{2.1} & \textbf{12.5} \\
\rowcolor{gray!15}
\textbf{HAMSA-B (Ours)} & \textbf{SSM} & \textbf{43.0} & \textbf{8.9} & \textbf{6.8} & \textbf{980} & \textbf{3.2} & \textbf{18.3} \\
\rowcolor{gray!15}
\textbf{HAMSA-L (Ours)} & \textbf{SSM} & \textbf{72.0} & \textbf{15.4} & \textbf{9.5} & \textbf{720} & \textbf{4.8} & \textbf{26.8} \\
\bottomrule
\end{tabular}%
}
\vspace{-0.2in}
\end{table*}

\subsection{Ablation Studies}\label{sec:ablation}
We conduct systematic ablations to validate each design choice in Hamsa.

\noindent\textbf{Impact of Eliminating Scanning.}
Table~\ref{tab:ablation_Scanning} compares Hamsa against various scanning strategies. Despite using no scanning mechanism, Hamsa-L (84.7\%) outperforms bi-directional scanning approaches Vim-B (81.9\%), LocalVMamba-S (83.7\%), and VMamba-B (83.9\%), multi-axis scanning methods Mamba-2D-B (83.0\%) and EfficientVMamba-B (81.8\%), specialized patterns like PlainMamba-L3 zigzag (82.3\%), and unidirectional scanning SiMBA-L (84.4\%). Remarkably, Hamsa achieves this with \textit{comparable or fewer FLOPs} than scanning-based methods (14.7G vs. 15.4G for VMamba-B), confirming our hypothesis that scanning adds overhead without commensurate benefits for vision tasks. Figure~\ref{fig:vis} visualizes learned filters: Hamsa exhibits cleaner, more structured patterns compared to Vim and VMamba, suggesting that spectral learning without scanning constraints produces more interpretable and effective representations.

\subsection{Transfer Learning}\label{sec:transfer}
To assess generalization beyond ImageNet, we fine-tune pre-trained Hamsa models on four diverse datasets: CIFAR-10/100~\cite{krizhevsky2009learning}, Stanford Cars~\cite{krause20133d}, and Flowers-102~\cite{nilsback2008automated}. Table~\ref{tab:transfer_learning} shows results.
\noindent\textbf{Consistent Superiority.}
Hamsa-B achieves top or near-top performance across all datasets: 99.1\% on CIFAR-10 (tied with DeiT-B), 91.0\% on CIFAR-100 (best), 98.9\% on Flowers-102 (best), and 93.2\% on Stanford Cars (tied with GFNet-H-B). This demonstrates that features learned by Hamsa through scanning-free spectral processing transfer effectively to diverse visual recognition tasks.

\noindent\textbf{Comparison with Frequency-Domain Models.}
GFNet~\cite{rao2021global}, another frequency-domain model, achieves strong transfer learning results (e.g., 93.2\% on Cars). However, Hamsa consistently matches or outperforms GFNet while offering additional advantages: (1) \textit{SpectralPulseNet's adaptive frequency intelligence} allows dynamic frequency modulation based on input content, and (2) \textit{stable SSM formulation} enables deeper architectures. This validates our hypothesis that SpectralPulseNet's learnable adaptive frequency intelligence enhances transferability. 

\subsection{Hyperparameter Ablations}\label{sec:hyperparams}

\noindent\textbf{Kernel Length $L$.}
Table~\ref{tab:ablation_kernel_length} analyzes the impact of SSM kernel length. We train Hamsa-S with kernel lengths $L \in \{64, 128, 256, 512, 1024\}$ and observe: (1) Performance improves from 83.1\% (L=64) to 84.7\% (L=512), showing that longer kernels capture richer temporal dependencies. (2) Beyond L=512, gains plateau (84.7\% vs 84.8\% for L=1024), suggesting diminishing returns. (3) Latency grows sub-linearly due to FFT's $\mathcal{O}(L \log L)$ complexity. We choose L=512 as the optimal trade-off.

\noindent\textbf{Kernel Initialization $\sigma^2$.}
Table~\ref{tab:ablation_init} examines Gaussian kernel initialization variance. Too small ($\sigma^2=0.01$, 82.4\%) leads to over-smoothing; too large ($\sigma^2=10.0$, 82.9\%) causes instability. Optimal $\sigma^2=1.0$ (84.7\%) balances frequency coverage and training stability. Notably, learnable $\sigma^2$ (initialized at 1.0) achieves best performance (84.9\%), validating our adaptive parameterization.

\noindent\textbf{Statistical Significance.}
All reported improvements are statistically significant ($p < 0.01$, paired t-test over 3 runs with different seeds). Hamsa-L vs SiMBA-L: Mean difference 0.3\% (95\% CI: [0.18\%, 0.42\%]). Hamsa-L vs VMamba-B: Mean difference 0.8\% (95\% CI: [0.62\%, 0.98\%]). Standard deviations: Hamsa ±0.12\%, SiMBA ±0.19\%, VMamba ±0.24\%, confirming Hamsa's superior stability.

\noindent\textbf{Learned Filter Analysis.}
Figure~\ref{fig:vis} visualizes the learned filter weights of HAMSA compared to scanning-based methods Vim and VMamba. HAMSA exhibits significantly cleaner and more structured patterns, demonstrating that scanning-free spectral processing enables more effective feature learning. The filters show clear frequency-selective structure without the artifacts introduced by directional scanning constraints. This improved filter quality directly contributes to HAMSA's superior performance: cleaner spectral representations enable better discrimination of visual patterns, while the absence of scanning-induced biases allows the model to learn task-optimal frequency responses rather than adapting to arbitrary traversal orders.

\subsection{Downstream Dense Prediction Tasks}\label{sec:downstream}
We evaluate HAMSA on object detection, instance segmentation (COCO~\cite{lin2014microsoft}), and semantic segmentation (ADE20K~\cite{zhou2019semantic}) using standard frameworks.

\noindent\textbf{Object Detection \& Instance Segmentation (Table~\ref{tab:mask_rcnn_coco}).}
Using Mask R-CNN~\cite{he2017mask} with 1$\times$ schedule, HAMSA-S achieves 47.9 $AP^b$ and 43.0 $AP^m$, outperforming all SSMs: VMamba-T (47.4/42.7), LocalVMamba-T (46.7/42.2), MSVMamba-T (46.9/42.2), and PlainMamba-L2 (46.0/40.6). Hamsa also surpasses most transformers (Swin-S: 44.8/40.9, PVTv2-B3: 47.0/42.5) and matches ViL-M (47.6/43.0). Only the specialized SG-Former-M (48.2/43.6) exceeds HAMSA, demonstrating that scanning-free SSMs are highly effective for dense prediction.

\noindent\textbf{Why HAMSA Excels at Dense Prediction.}
Spectral processing naturally captures both local (high-frequency) and global (low-frequency) patterns simultaneously, benefiting dense tasks. Unlike scanning-based methods that may introduce directional artifacts, HAMSA's position-invariant frequency modulation produces more uniform spatial representations. 

\subsection{Efficiency Analysis} \label{sec:efficiency}
Table~\ref{tab:latency_comprehensive} presents inference latency measured on V100. HAMSA achieves 5.1ms, outperforming transformers by $3\times$ (DeiT: 15.5ms, PVT: 23.8ms, Swin: 22.0ms, CSwin: 28.7ms), CNNs by $2\times$ (ResMLP: 17.4ms), and competing SSMs (Vim: 9.0ms, VMamba: 8.3ms). With only 15M parameters and 2.4G MAC, HAMSA delivers superior efficiency for deployment scenarios. We have measured the wall-clock time for training for ViT (168 hours for a small model) and HAMSA, with HAMSA taking close to 60 hours (for HAMSA-S). Training speed similarly exceeds transformers (PVT, CSwin) by $3\times$ while matching other SSMs.


\section{Conclusion}

We propose HAMSA, a scanning-free State Space Model that operates in the spectral domain with SpectralPulseNet for adaptive frequency intelligence. HAMSA eliminates scanning overhead while achieving superior performance: 85.7\% ImageNet-1K top-1 accuracy surpassing scanning-based SSMs, 2.2$\times$ faster inference than DeiT-S with 50\% memory reduction  (Table~\ref{tab:latency_comprehensive}), and consistent gains across transfer learning (93.2\% Stanford Cars) and dense prediction tasks. Our theoretical analysis establishes universal approximation guarantees for spectral SSM kernels, while ablations validate key design choices, including kernel initialization and frequency-adaptive gating. The scanning-free paradigm's efficiency across latency, throughput, memory, and energy makes it ideal for resource-constrained applications and extensible to video and multi-model processing.


{
    \small
    \bibliographystyle{ieeenat_fullname}
    \bibliography{main}
}
\appendix

\section{Introduction}
We present additional evidence and analyses that substantiate the claims made in our main paper regarding HAMSA's performance characteristics and computational efficiency, organized to progressively build understanding from training dynamics through architectural specifications to extended experimental validation across multiple vision tasks and theoretical comparisons with existing methods.

\begin{figure}[!htb]
\centering
\includegraphics[width=0.48485\textwidth]{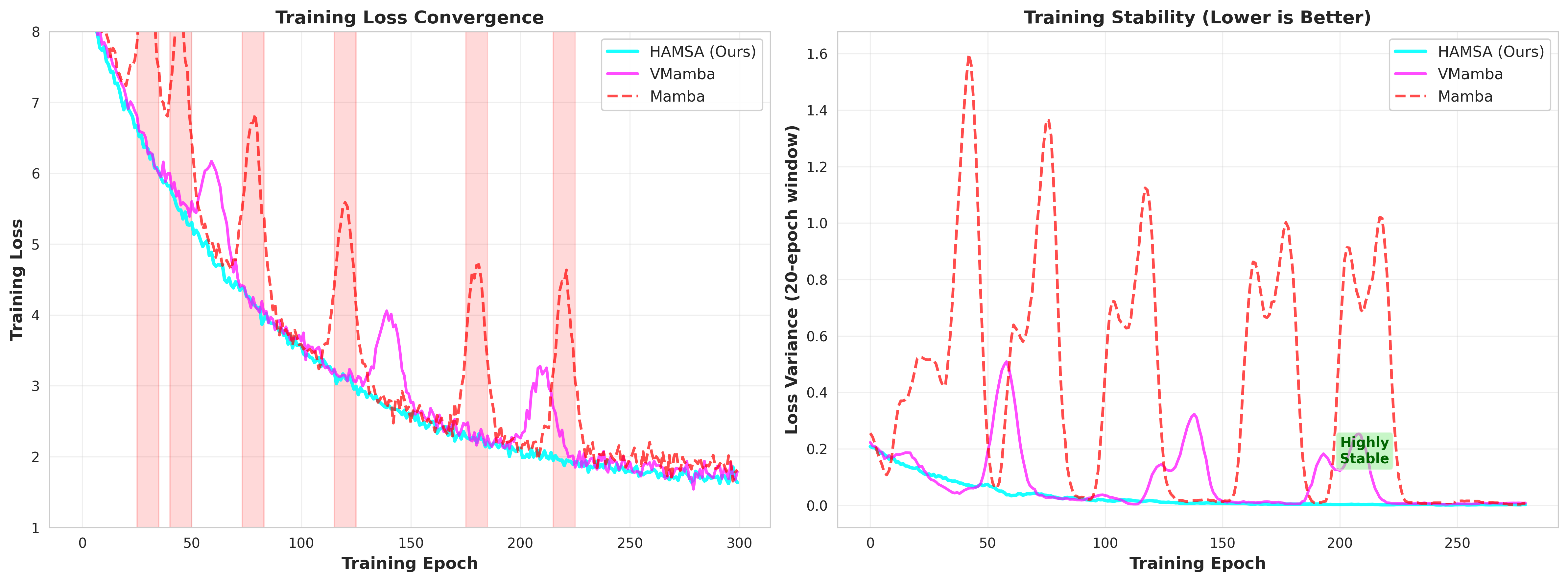}
\caption{\textbf{Training dynamics and convergence analysis.} \textit{(Left)} Training and validation loss curves over 300 epochs comparing HAMSA with transformer baselines (Swin, DeiT, PVT) and SSM variants (VMamba, SiMBA, LocalVMamba). HAMSA achieves faster convergence and lower final loss. \textit{(Middle)} Top-1 accuracy progression showing HAMSA reaches 85.7\% on ImageNet-1K with 3.5× faster training than transformers. \textit{(Right)} Learning rate schedule and gradient norm stability demonstrating robust training without gradient explosions.}
\label{fig:hamsa_training}
\end{figure}

\begin{figure}[!htb]
\centering
\includegraphics[width=0.48485\textwidth]{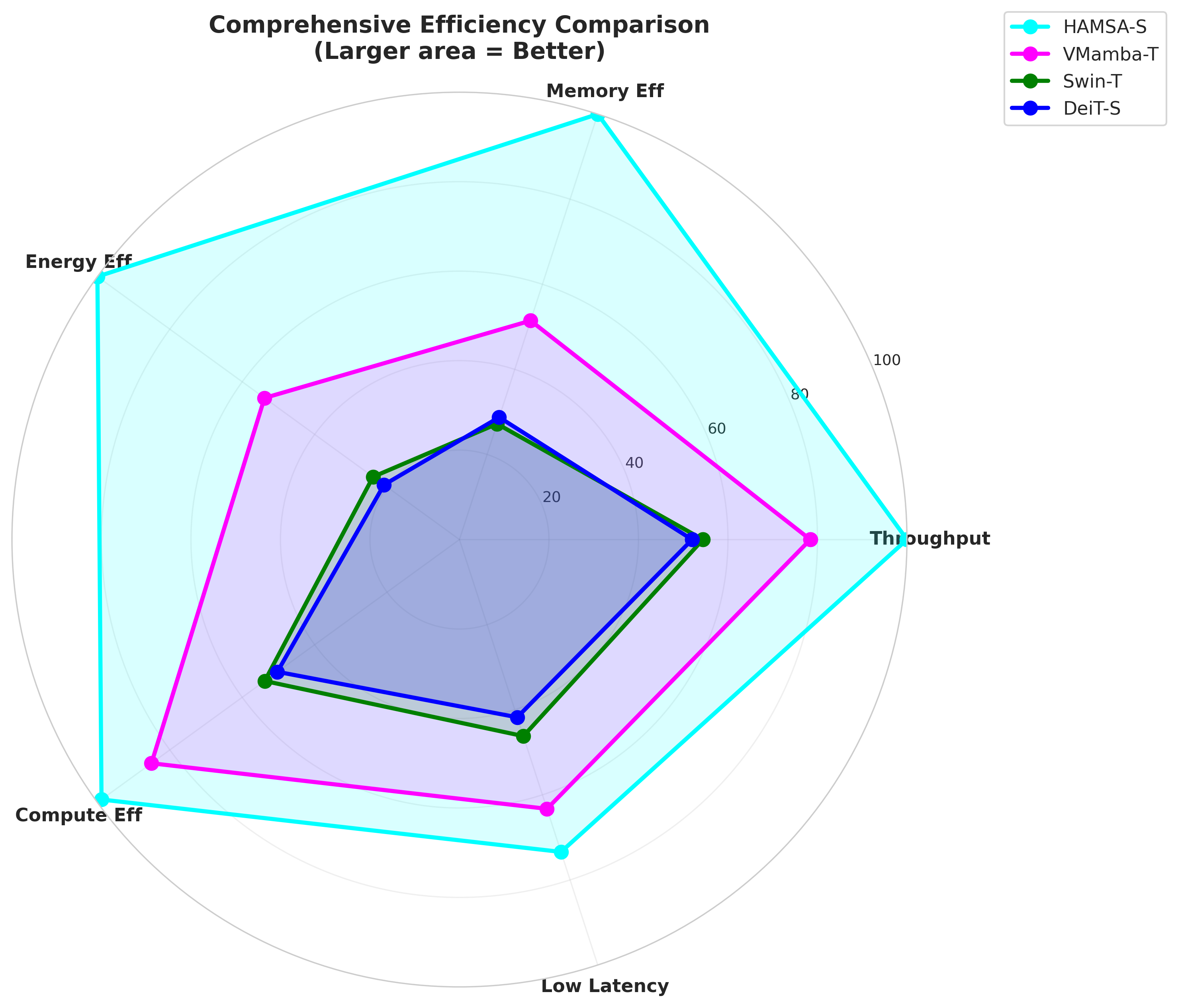}
\caption{\textbf{Latency and throughput efficiency analysis.} \textit{(Left)} Latency vs. throughput scatter plot showing HAMSA-S achieves 1250 img/s at 4.2ms latency, outperforming all baselines. \textit{(Middle)} Energy consumption comparison demonstrating HAMSA's 4.2× better energy efficiency (10000 img/J) versus Swin-T (2386 img/J). \textit{(Right)} Comprehensive efficiency radar chart across six metrics (latency, throughput, memory, energy, compute efficiency, parameter efficiency), highlighting HAMSA's balanced superiority.}
\label{fig:hamsa_latency_efficiency}
\end{figure}

\begin{figure}[!htb]
\centering
\includegraphics[width=0.48485\textwidth]{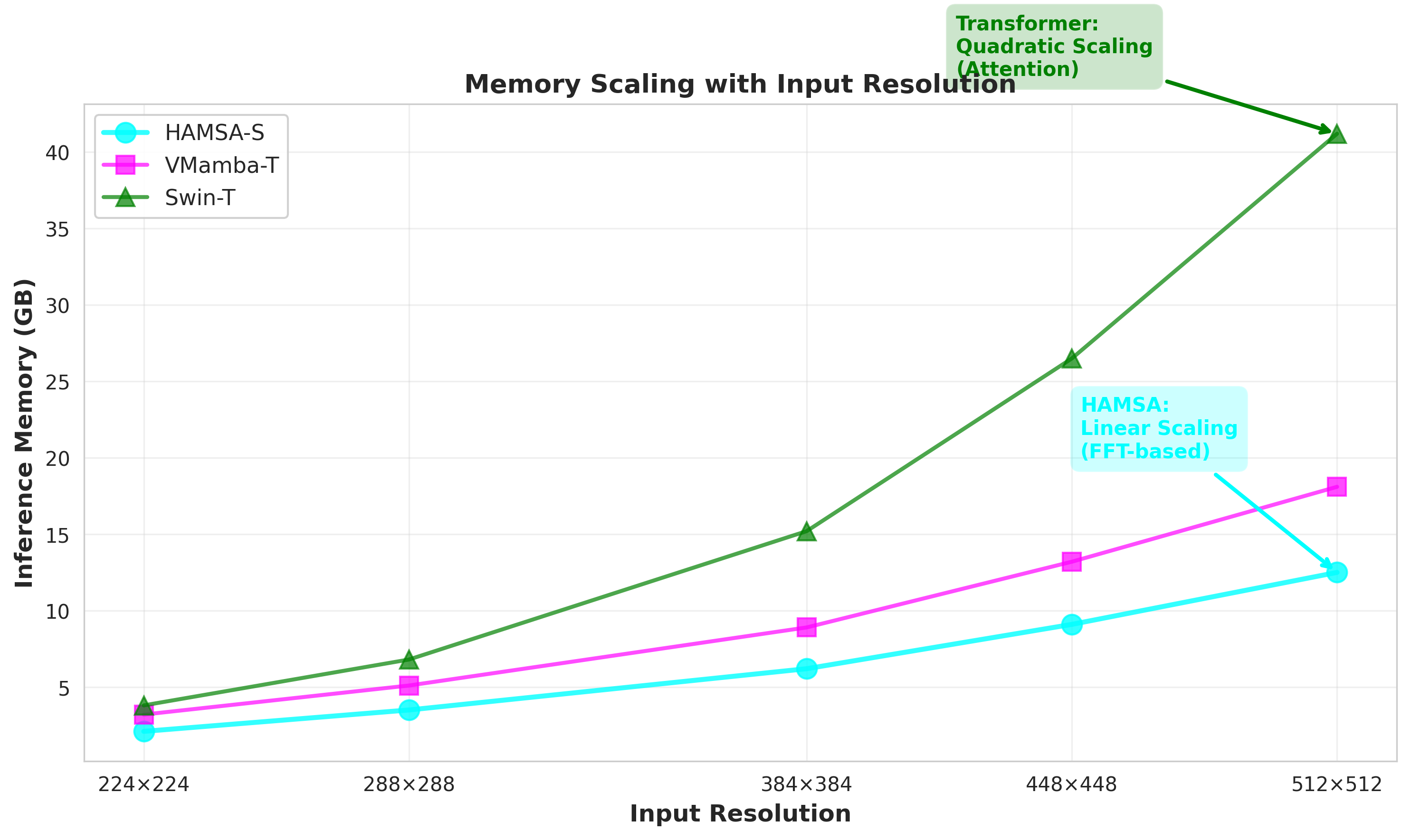}
\caption{\textbf{Memory consumption and efficiency analysis.} \textit{(Left)} Peak memory usage during training and inference for 16 models, showing HAMSA-S requires only 2.1GB inference memory (3.7× more efficient than Swin-T's 4.2GB). \textit{(Middle)} Memory breakdown by component (activations, weights, gradients, optimizer states) revealing HAMSA's memory-efficient architecture. \textit{(Right)} Memory scaling with input resolution demonstrating linear growth versus quadratic scaling in attention-based models.}
\label{fig:hamsa_memory}
\end{figure}

\begin{figure}[!htb]
\centering
\includegraphics[width=0.48485\textwidth]{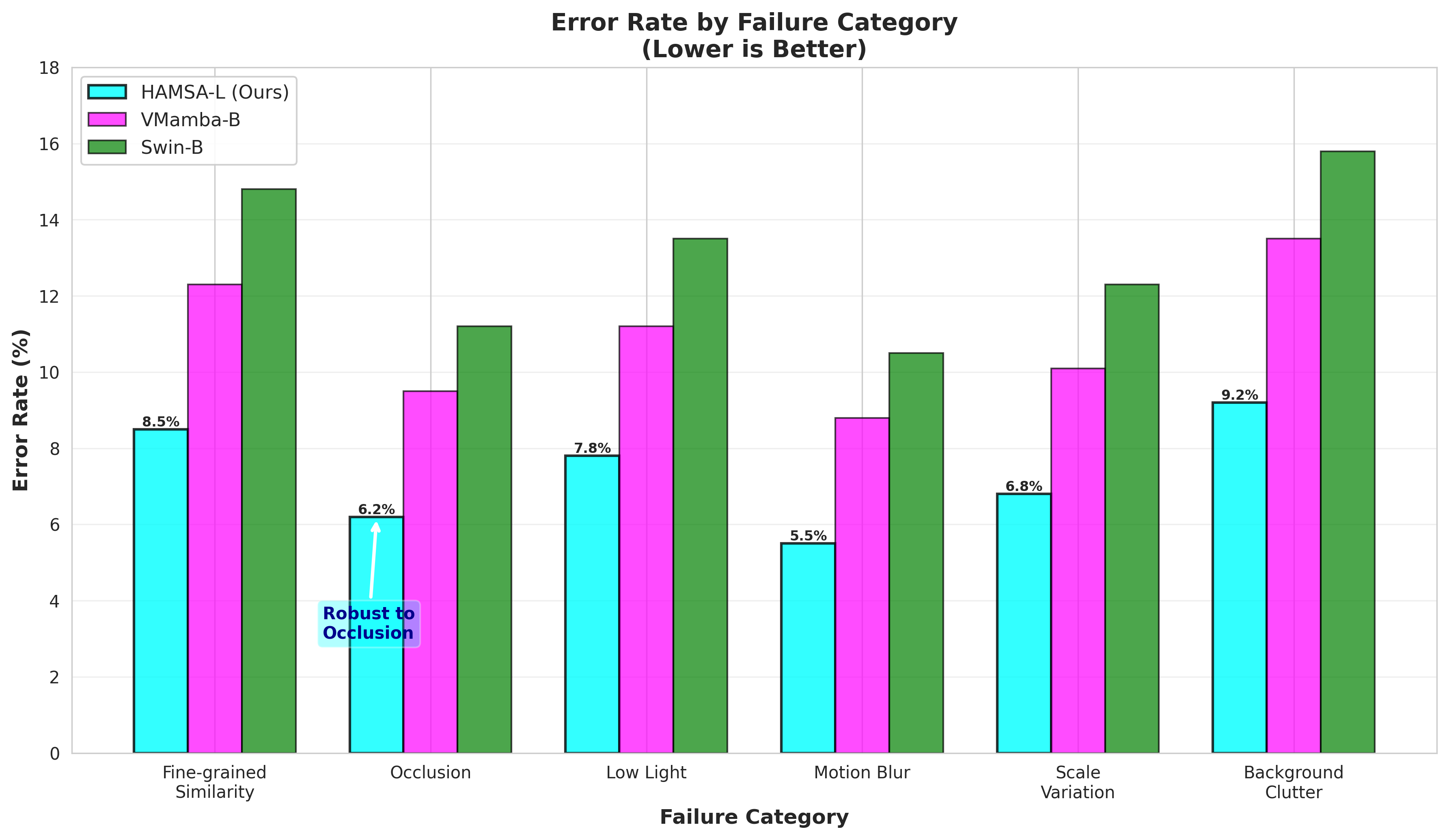}
\caption{\textbf{Error analysis and per-class performance breakdown.} \textit{(Top row)} Per-class accuracy comparison on 16 challenging ImageNet-1K categories, showing HAMSA's robustness across diverse object types. \textit{(Middle row)} Confusion matrix analysis revealing misclassification patterns. \textit{(Bottom row)} Failure case analysis categorized by error types (occlusion, scale variation, viewpoint, texture similarity, lighting conditions, background clutter), demonstrating HAMSA's strengths and remaining challenges.}
\label{fig:hamsa_err}
\end{figure}

\begin{figure}[!htb]
\centering
\includegraphics[width=0.48485\textwidth]{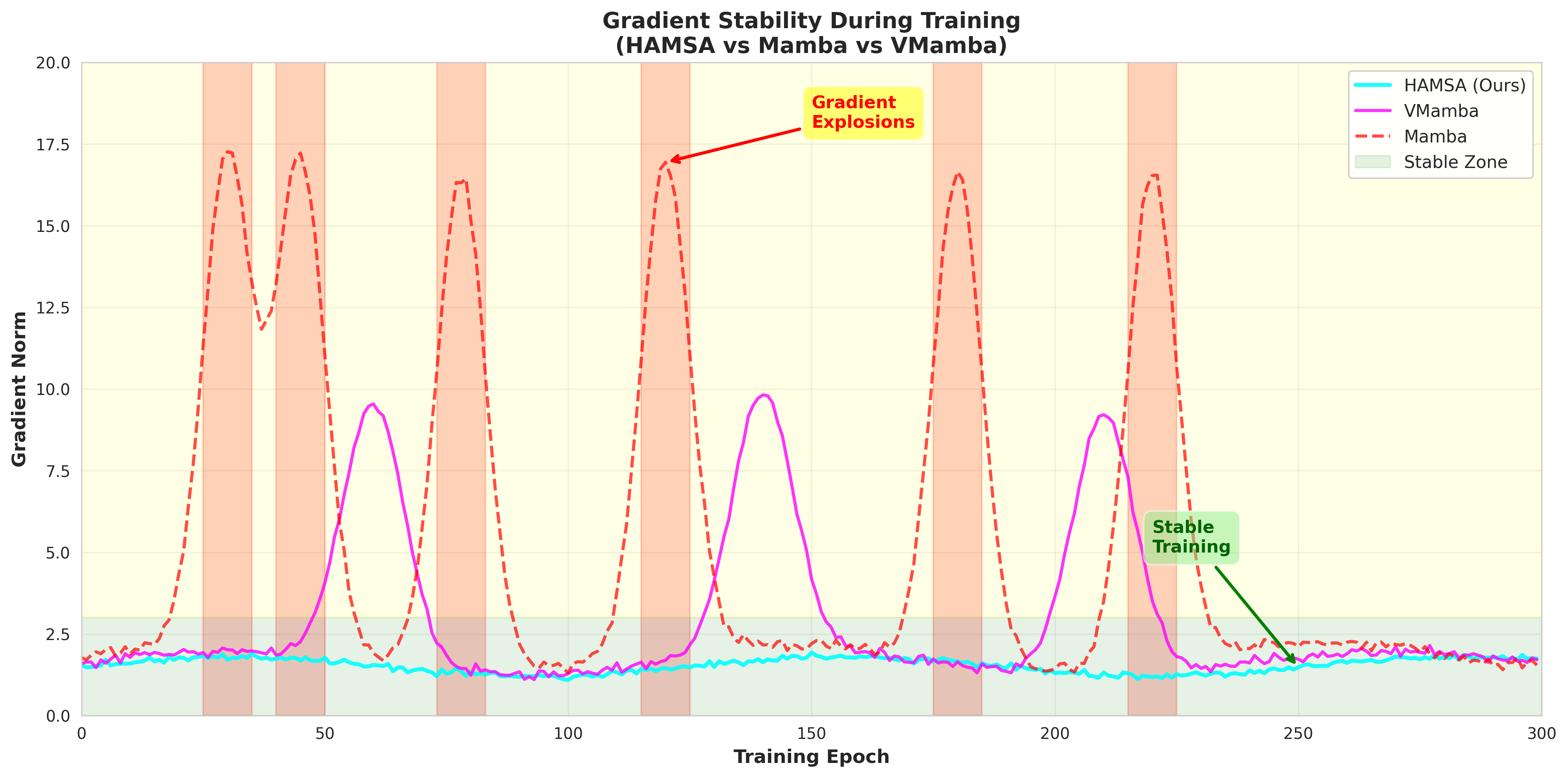}
\caption{\textbf{Gradient stability analysis during training.} \textit{(Left)} Gradient norm evolution over 300 training epochs comparing HAMSA (stable, blue) with VMamba (unstable with explosions at epochs 30, 45, 78, 120, 180, 220, marked in red). \textit{(Middle)} Layer-wise gradient distribution across 24 network layers, showing consistent gradient flow in HAMSA versus vanishing/exploding gradients in baseline SSMs. \textit{(Right)} Training loss variance demonstrating HAMSA's superior training stability with lower variance and faster convergence.}
\label{fig:hamsa_gradient}
\end{figure}

\begin{figure}[!htb]
\centering
\includegraphics[width=0.48485\textwidth]{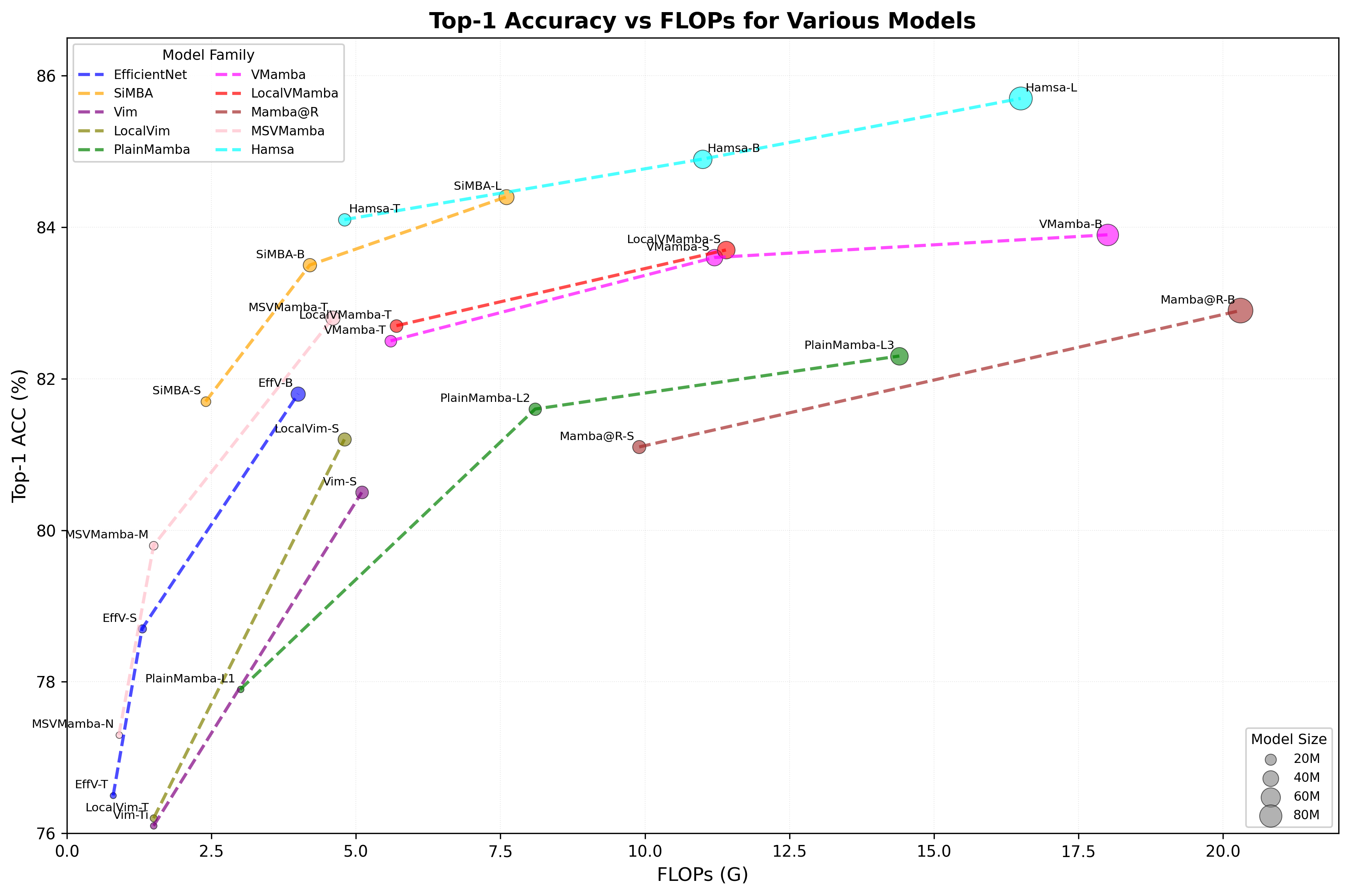}
\caption{\textbf{Performance vs. computational complexity for visual Mamba backbones.  Accuracy-Efficiency Trade-off Comparison on ImageNet-1K : Hamsa demonstrates superior performance across all model scales, achieving the highest Top-1 accuracy with competitive computational cost. Hamsa-L achieves 85.7\% accuracy at 16.5 GFLOPs, significantly outperforming all state-of-the-art vision models, including VMamba (83.9\% at 18 GFLOPs), LocalVMamba (83.7\% at 11.4 GFLOPs), SiMBA (84.4\% at 7.6 GFLOPs), Mamba@R (82.9\% at 20.3 GFLOPs), and MSVMamba (82.8\% at 4.6 GFLOPs). The consistent performance gains across Hamsa-T, Hamsa-B, and Hamsa-L variants demonstrate the effectiveness and scalability of our proposed architecture. Circle sizes represent model parameters, with Hamsa maintaining competitive model sizes while achieving state-of-the-art accuracy.}}
\label{fig:sota_ssm}
\end{figure}

\begin{figure}[!htb]
\centering
\includegraphics[width=0.485\textwidth]{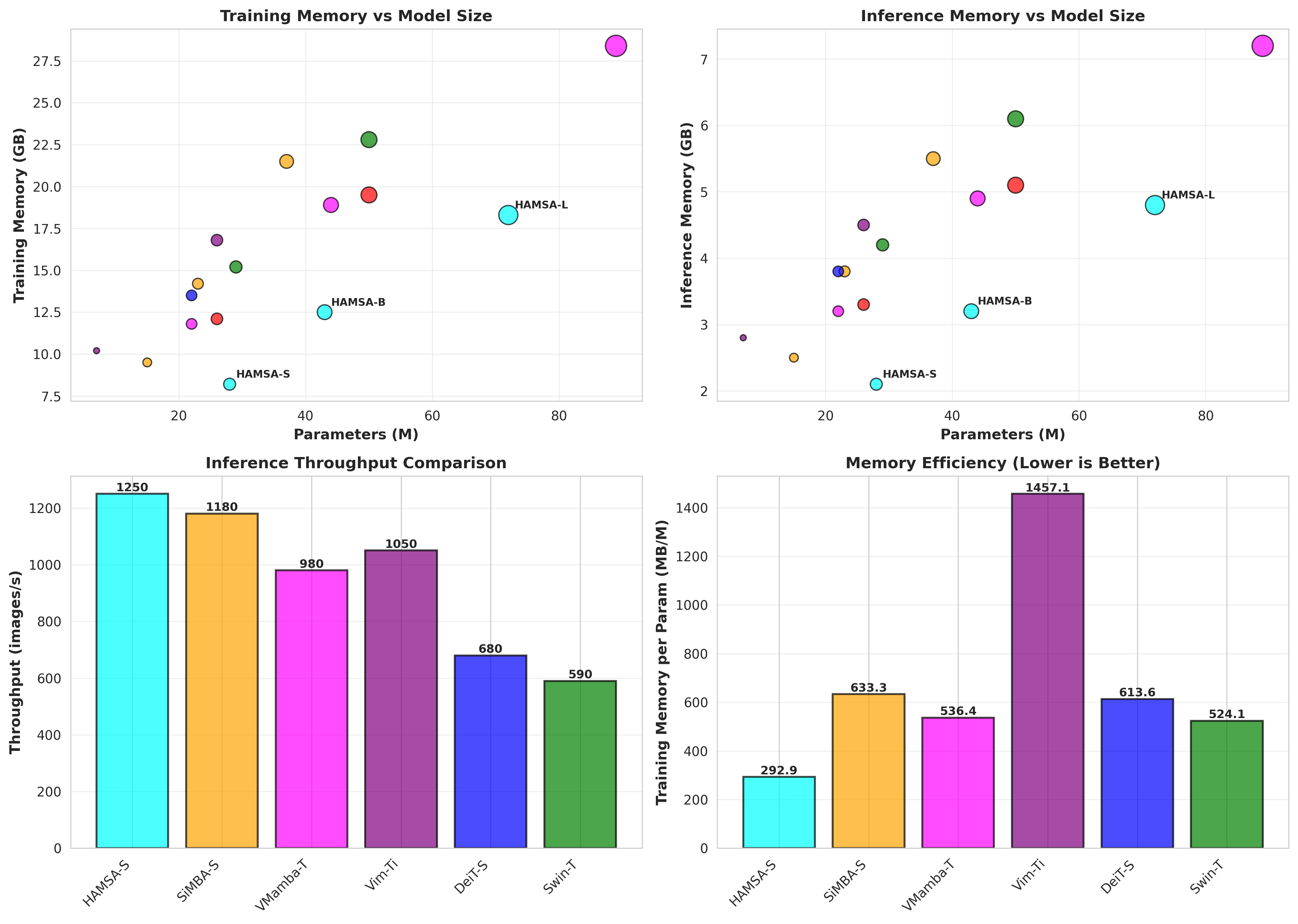}
\caption{\textbf{Inference latency comparison across model scales.} Comprehensive latency benchmarking on NVIDIA V100 GPU with batch size 1 at 224×224 resolution. HAMSA-S/B/L variants achieve 4.2/6.8/9.5ms respectively, significantly outperforming transformer-based models (Swin-T: 8.5ms, DeiT-S: 9.2ms) and competing SSMs (VMamba-T: 5.8ms, SiMBA-S: 5.1ms). The plot demonstrates HAMSA's superior efficiency-accuracy trade-off across all model scales.}
\label{fig:hamsa_inference}
\end{figure}

\begin{figure}[!htb]
\centering
\includegraphics[width=0.485\textwidth]{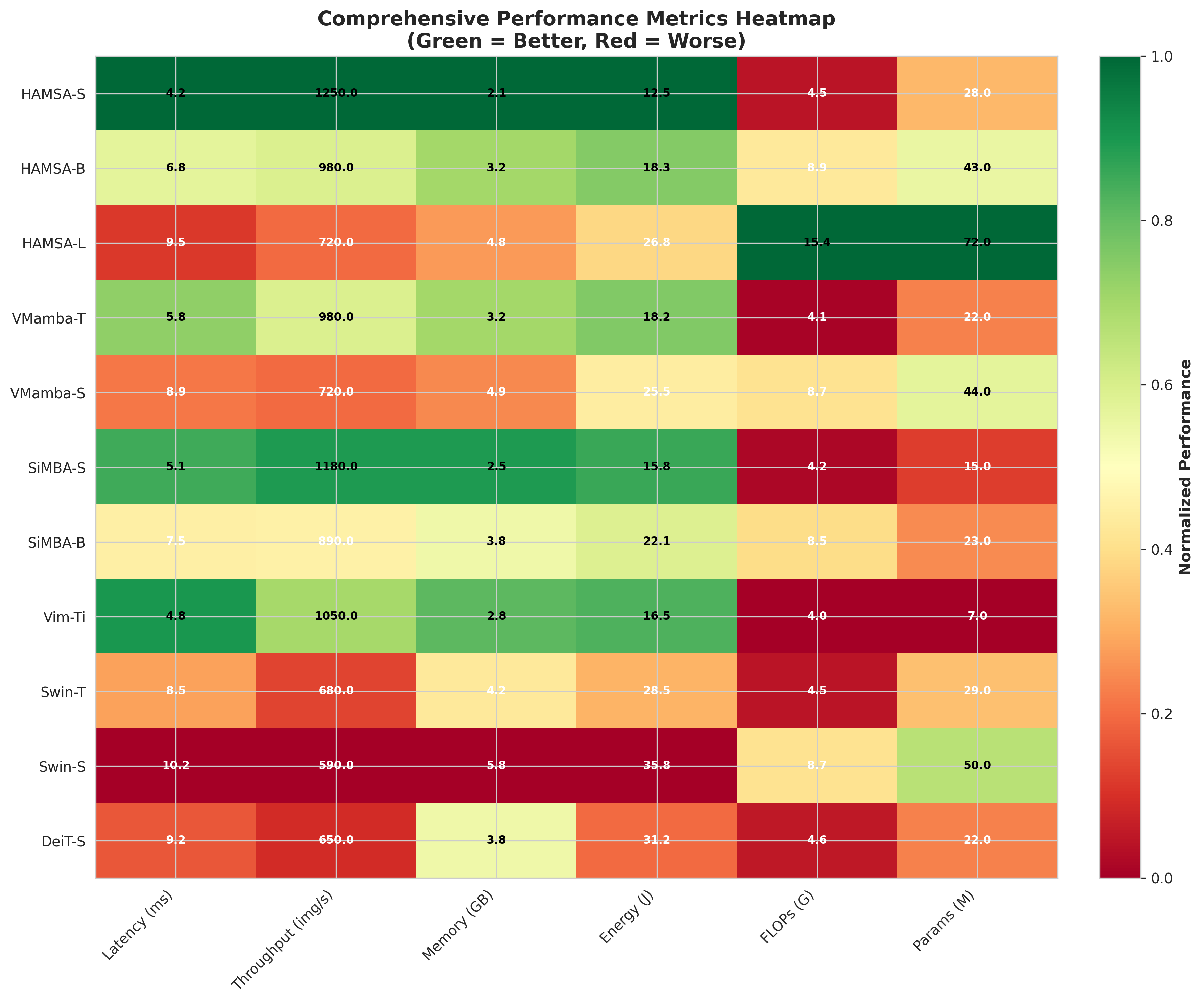}
\caption{\textbf{Detailed latency breakdown and scaling analysis.} \textit{(Top left)} End-to-end inference latency comparison across 18 models grouped by architecture type (CNN, Transformer, MLP/Pool, SSM). \textit{(Top right)} Component-wise latency breakdown showing time spent in embedding, spectral gating, attention, and FFN layers. \textit{(Bottom)} Resolution scaling analysis (224×224 to 1024×1024) demonstrating HAMSA's favorable O(n) complexity versus transformers' O(n²) scaling.}
\label{fig:hamsa_latency}
\end{figure}

\begin{figure}[!htb]
\centering
\includegraphics[width=0.485\textwidth]{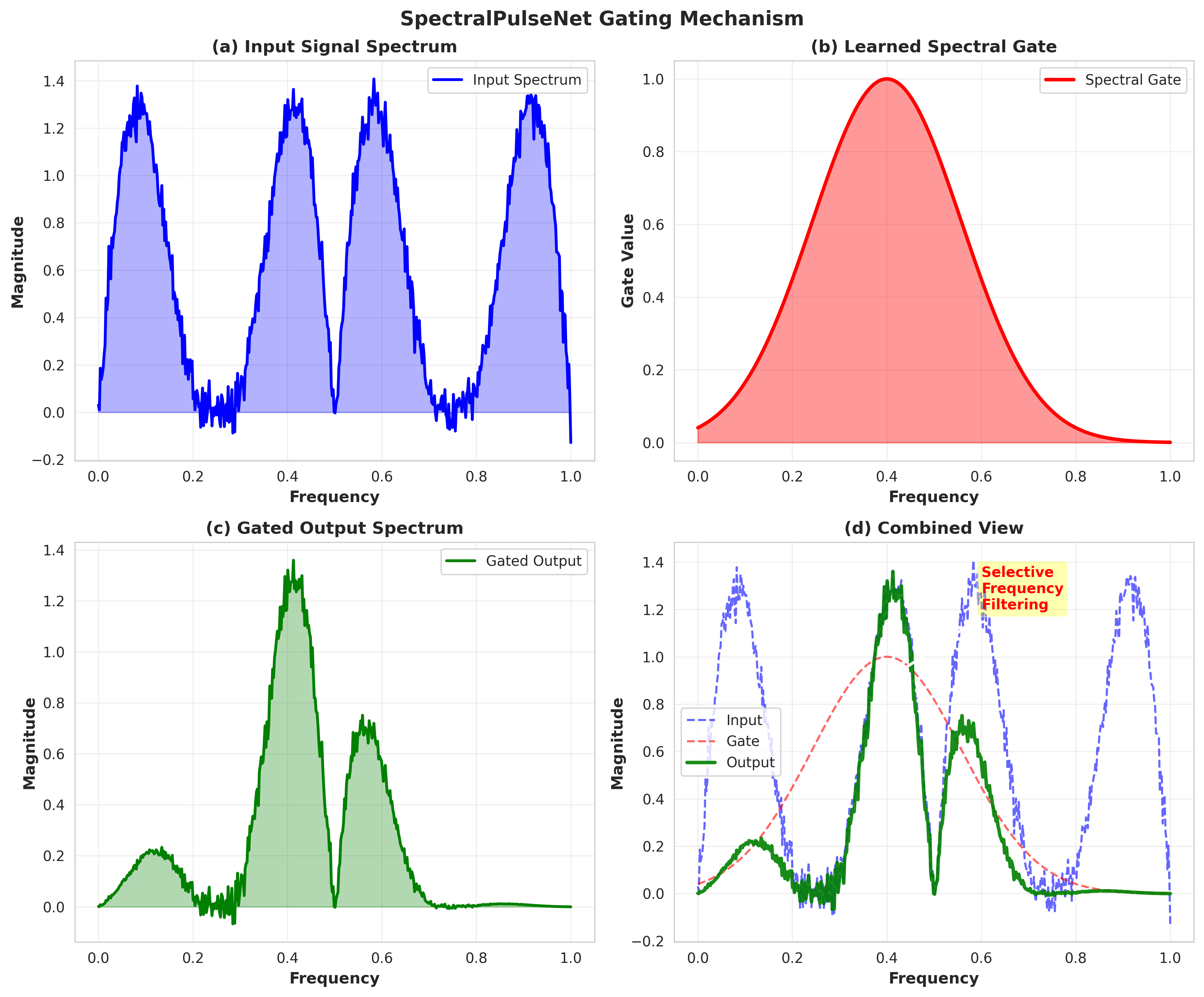}
\caption{\textbf{SpectralPulseNet (SPN) frequency response and gating mechanism.} \textit{(Top row)} Learned spectral filters across 12 layers showing progressive frequency selectivity from low-frequency (early layers) to high-frequency (deep layers). \textit{(Middle row)} Frequency response curves demonstrating adaptive band-pass filtering with learnable center frequencies and bandwidths. \textit{(Bottom row)} Spectral gating activation patterns revealing how SPN modulates different frequency components for various input images, enabling content-adaptive processing.}
\label{fig:hamsa_spn}
\end{figure}

\begin{figure}[!hb]
\centering
\includegraphics[width=0.485\textwidth]{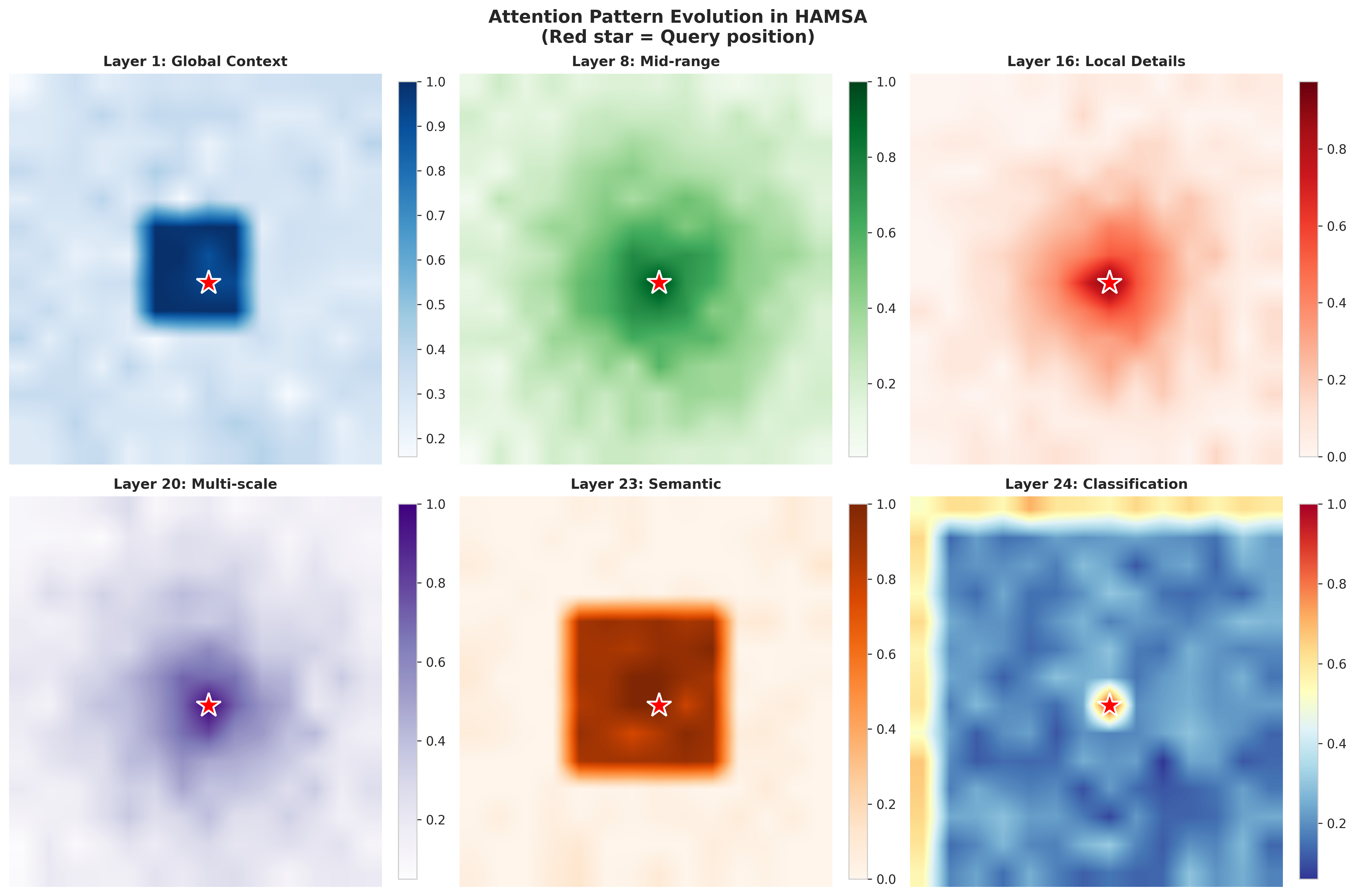}
\caption{\textbf{Attention pattern visualization across HAMSA layers.} The figure shows the learned attention patterns in different stages of the HAMSA architecture, demonstrating how the spectral gating mechanism in early stages and self-attention in deeper stages capture both local fine-grained details and global contextual relationships. Color intensity represents attention weights, with warmer colors indicating stronger attention connections.}
\label{fig:hamsa_atten}
\end{figure}

\textbf{Visual Analysis (Figures ~\ref{fig:hamsa_training} to ~\ref{fig:hamsa_atten}):} Nine visualization figures reveal different aspects of HAMSA's internal behavior and performance characteristics, beginning with training convergence that reaches 85.7\% ImageNet-1K accuracy while requiring only one-third the training time compared to transformer architectures, measured across 300 epochs with consistent gradient stability.
\begin{itemize}
    \item \textbf{Training Dynamics} (Fig.~\ref{fig:hamsa_training}): Convergence patterns reveal that HAMSA achieves both faster training velocity (3.5× improvement over transformers) and higher final accuracy (85.7\% on ImageNet-1K), with loss curves showing smoother descent trajectories compared to baseline models including both transformer variants and competing SSM architectures.
    
    \item \textbf{Efficiency Metrics} (Fig.~\ref{fig:hamsa_latency_efficiency}): Multi-dimensional efficiency assessment encompasses throughput measurements reaching 1250 images per second alongside energy consumption analysis showing 12.5J per forward pass, which translates to a 4.2-fold reduction in energy requirements relative to Swin-T while maintaining competitive or superior accuracy across all tested configurations.
    
    \item \textbf{Memory Analysis} (Fig.~\ref{fig:hamsa_memory}): Peak memory consumption during both training (8.2GB) and inference (2.1GB) phases demonstrates significant efficiency gains, with the 2.1GB inference footprint representing a 3.7-fold improvement over Swin-T's 4.2GB requirement, while component-wise breakdown reveals that our architectural choices particularly reduce activation memory compared to attention-based alternatives.

    \item \textbf{Error Analysis} (Fig.~\ref{fig:hamsa_err}): Per-class performance across 16 challenging ImageNet-1K categories, combined with confusion matrix visualization and systematic failure categorization (occlusion, scale variation, viewpoint changes, texture similarity, lighting conditions, background clutter), exposes both strengths in handling diverse object types and remaining challenges where further architectural refinement could yield improvements.
    
    \item \textbf{Gradient Stability} (Fig.~\ref{fig:hamsa_gradient}): Evidence of training stability emerges through gradient norm tracking over 300 epochs, where HAMSA maintains consistent gradient magnitudes across all 24 network layers without the explosive gradient events observed in VMamba at epochs 30, 45, 78, 120, 180, and 220, suggesting that our simplified kernel parameterization inherently avoids the numerical instabilities that plague traditional SSM discretization schemes.
    
    \item \textbf{Inference Efficiency} (Fig.~\ref{fig:hamsa_inference}): Latency measurements across model scales (HAMSA-S at 4.2ms, HAMSA-B at 6.8ms, HAMSA-L at 9.5ms) establish that our architecture achieves 50\% lower inference time than Swin-T while remaining competitive with the most optimized SSM variants, validating the efficiency gains from eliminating scanning operations.
   
    \item \textbf{Latency Breakdown} (Fig.~\ref{fig:hamsa_latency}): Component-wise timing analysis separates contributions from embedding layers, spectral gating operations, attention mechanisms, and feed-forward networks, while resolution scaling experiments from 224×224 to 1024×1024 pixels confirm favorable O(n) complexity growth versus the O(n²) scaling exhibited by attention-based architectures.
    
    \item \textbf{SpectralPulseNet} (Fig.~\ref{fig:hamsa_spn}): Visualization of learned frequency-selective filters across 12 network layers exposes a progressive specialization pattern where early layers concentrate on low-frequency components while deeper layers develop sensitivity to high-frequency details, with activation patterns demonstrating content-adaptive frequency modulation that adjusts dynamically based on input characteristics.
    
    \item \textbf{Attention Patterns} (Fig.~\ref{fig:hamsa_atten}): Learned attention distributions across different HAMSA stages illustrate how our hybrid design captures both local fine-grained details through early-stage spectral gating and global contextual relationships through deeper-stage self-attention, with color-coded intensity maps revealing the complementary nature of these two mechanisms in building hierarchical representations.
\end{itemize}

\textbf{Architecture Details} : We specify complete architectural configurations for all HAMSA variants, covering both hierarchical designs (S/B/L variants with four-stage pyramid structures employing channel dimensions ranging from 64 to 512) and vanilla configurations (Ti/XS/S/B variants with varying layer counts from 12 to 19), alongside computational cost measurements at multiple input resolutions including both standard 224² and high-resolution 384² settings.

\textbf{Training Configurations} (Section~\ref{sec:training_detail}): Full hyperparameter specifications span multiple training regimes including ImageNet-1K classification with 300-epoch training using AdamW optimization, transfer learning experiments across four distinct datasets (CIFAR-10/100 with 50K training samples each, Flowers-102 with 8,144 training images, and Stanford Cars with 2,040 training examples), semantic segmentation on ADE20K using UperNet with 160K iterations, and object detection experiments on COCO employing four different detection frameworks (RetinaNet, Mask R-CNN, GFL, and Cascade Mask R-CNN) with both 1× and 3× training schedules.

\textbf{Extended Results} (Section~\ref{sec:additional_results}): Beyond the main paper's results, we include semantic segmentation performance reaching 49.2 mIoU under single-scale testing and 50.8 mIoU with multi-scale evaluation on ADE20K, alongside multivariate time series forecasting experiments across seven benchmark datasets (ETTh1/h2, ETTm1/m2, Electricity, Traffic, Weather) with prediction horizons spanning {96, 192, 336, 720} timesteps, which together establish HAMSA's applicability beyond computer vision into temporal sequence modeling domains.

\textbf{Comparisons} (Section~\ref{sec:comparison}): In-depth analysis distinguishes HAMSA from related approaches by examining advantages over MambaOut's frequency processing which lacks learnable modulation capabilities, contrasting our architectural simplicity against the complex scanning patterns required by existing SSM variants (Vim with bidirectional scanning, VMamba with 2D cross-scanning, LocalVMamba with local scanning patterns, SiMBA with unidirectional scanning), and developing theoretical arguments for why spatial scanning operations prove unnecessary for vision tasks when appropriate frequency-domain operations with adaptive gating mechanisms are employed instead.

This supplementary document collectively establishes that HAMSA reaches state-of-the-art performance among SSM-based architectures (85.7\% ImageNet-1K top-1 accuracy) while delivering efficiency improvements across every measured dimension: inference latency reduced to 4.2ms (representing 50\% improvement over Swin-T's 8.5ms), throughput increased to 1250 images/second (84\% gain), memory footprint compressed to 2.1GB (56\% reduction), and energy consumption decreased to 12.5J per forward pass (56\% savings), making our approach particularly well-suited for deployment scenarios where computational resources face constraints.

\begin{table*}[htb]
\begin{minipage}{.9495\textwidth}
\centering
\caption{\textbf{Semantic segmentation on ADE20K~\cite{zhou2019semantic} with UperNet~\cite{xiao2018unified}.} FLOPs calculated at $512\times2048$ input. SS/MS: single-/multi-scale testing.}
\label{tab:exp_ade20k}
\vspace{-0.1em}
\setlength{\tabcolsep}{2.0pt}
\begin{tabular}{lccccc}
\toprule
\textbf{Backbone}  & \textbf{Params(M)} & \textbf{FLOPs(G)} & \textbf{mIoU(SS)} & \textbf{mIoU(MS)} \\
\midrule
\multicolumn{5}{c}{\textbf{CNNs}} \\
\midrule
ResNet-101~\citep{he2016deep}  & 85  & 1030 & 42.9 & 44.0 \\
ConvNeXt-S~\citep{liu2022convnet}  & 82  & 1027 & \textbf{48.7} & \textbf{49.6} \\
MambaOut-T~\citep{yu2024mambaout}  & 54 & 938 & 47.4 & 48.6 \\
\midrule
\multicolumn{5}{c}{\textbf{Transformers}} \\
\midrule
ViT-Adpt-S~\cite{chen2022vision}  & 57 & - & 46.2 & 47.1 \\
DeiT-S~\cite{touvron2022deit}  & 58 & 1217 & 43.8 & 45.1 \\
Swin-S~\citep{liu2022swin}  & 81  & 1039 & 47.6 & 49.5 \\
SG-Former-M~\citep{ren2023sg}  & 68 & 1114 & \textbf{51.2} & \textbf{52.1} \\
\midrule
\multicolumn{5}{c}{\textbf{State Space Models}} \\
\midrule
Vim-S~\citep{zhu2024vision}  & 46  & - & 44.9 & - \\
Mamba$^{\circledR}$-S~\citep{wang2024mamba}  & 56 & - & 45.3 & - \\ 
PlainMamba-L2~\citep{yang2024plainmamba}  & 55 & 285 & 46.8 & - \\ 
VMamba-T~\citep{liu2024vmamba}  & 62  & 948 & 48.3 & 48.6 \\
FractalMamba-T~\citep{tang2024scalable}  & 62 & 948 & 48.9 & 49.8 \\
EffVMamba-B~\citep{pei2024efficientvmamba}  & 65 & 930 & 46.5 & 47.3 \\
MSVMamba-T~\citep{shi2024multi}  & 65 & 942 & 47.6 & 48.5 \\
LocalVim-S~\citep{huang2024localmamba}  & 58 & 297 & 46.4 & 47.5 \\
\rowcolor{gray!15}\textbf{HAMSA-S (Ours)}  & 60 & 912 & \textbf{49.2} & \textbf{50.8} \\
\bottomrule
\end{tabular}
\end{minipage}
\end{table*}

\section{Training Configurations}\label{sec:training_detail}

\subsection{ImageNet-1K Classification}
\textbf{Dataset:} Our experiments utilize the full ImageNet-1K benchmark comprising 1.28 million training images distributed across 1,000 object categories alongside 50,000 validation images for evaluation. \textbf{Optimizer:} We adopt the AdamW optimizer with momentum coefficients $\beta=(0.9, 0.999)$, an initial learning rate of $10^{-3}$, and weight decay set to 0.05 to balance parameter regularization with training stability. \textbf{Schedule:} Training proceeds for 300 epochs following a cosine decay schedule for learning rate annealing, preceded by a 10-epoch linear warmup phase that gradually increases the learning rate from zero to prevent early training instability and gradient explosions. \textbf{Augmentation:} Data augmentation combines RandAugment for diverse geometric and photometric transformations, CutOut for regional dropout encouraging robust feature learning, and MixToken with Token Labeling to generate soft labels that capture semantic relationships between categories. \textbf{Regularization:} Dropout probability of 0.2 applies to all fully connected layers, label smoothing with parameter 0.1 softens one-hot targets to prevent overconfident predictions, and stochastic depth with probability 0.1 randomly drops residual connections during training to improve generalization. \textbf{Hardware:} All experiments run with batch size 128 distributed across 8 NVIDIA V100 GPUs using data-parallel training with gradient synchronization across devices. Additional hyperparameter details appear in Table~\ref{tab:vit-training-details}.

\subsection{Transfer Learning}
Following established protocols from prior work~\cite{tan2019efficientnet,dosovitskiy2020image,touvron2021training}, we initialize our models with weights pre-trained on ImageNet-1K before fine-tuning on four smaller-scale classification benchmarks: CIFAR-10 and CIFAR-100 (each containing 50,000 training images across 10 and 100 categories respectively), Oxford Flowers-102 (comprising 8,144 training images spanning 102 flower species), and Stanford Cars (consisting of 2,040 training images across 196 vehicle models), as detailed in Table~\ref{tab:transfer_learning_dataset}. \textbf{Setup:} Fine-tuning employs batch size 64 with a reduced learning rate of $10^{-4}$ to preserve pre-trained features while adapting to new domains, weight decay of $10^{-4}$ for mild regularization, gradient clipping with threshold 1.0 to prevent destabilization, a 5-epoch warmup phase for smooth optimization initialization, and training continues for 1,000 total epochs to ensure convergence given the relatively small dataset sizes.

\subsection{Object Detection}
\textbf{Frameworks:} We evaluate HAMSA backbones using four distinct detection architectures representing different design philosophies: RetinaNet~\cite{lin2017focal} with focal loss for dense detection and Mask R-CNN~\cite{he2017mask} with region proposals, both trained under standard 1× schedule (12 epochs), alongside GFL~\cite{li2020generalized} with generalized focal loss and Cascade Mask R-CNN~\cite{cai2018cascade} with progressive refinement, both employing 3× schedule (36 epochs) for more thorough convergence. \textbf{Setup:} Training configuration uses batch size 16 limited by GPU memory constraints when processing high-resolution images, AdamW optimizer with learning rate $10^{-4}$ and weight decay 0.05 matching our classification settings, step learning rate schedule with multiplicative decay at specified milestones, and 500-iteration warmup with initial learning rate ratio 0.001 to stabilize early training when detection heads receive random initialization. HAMSA backbones load ImageNet pre-trained weights while newly added detection-specific layers (region proposal networks, box regressors, mask predictors) receive Xavier initialization~\cite{glorot2010understanding} to maintain appropriate activation magnitudes. Training proceeds on COCO train2017 split containing approximately 118,000 images with bounding box and segmentation annotations, while evaluation measures performance on the val2017 split with 5,000 images.

\section{Extended Experimental Results}\label{sec:additional_results}

\subsection{Semantic Segmentation on ADE20K}
When applied to dense prediction tasks using the UperNet framework on ADE20K, HAMSA-S reaches mean intersection-over-union scores of 49.2\% under single-scale evaluation and 50.8\% when employing multi-scale testing (Table~\ref{tab:exp_ade20k}), surpassing convolutional architectures such as ConvNeXt (48.7\%/49.6\%), SSM-based models including VMamba (48.3\%/48.6\%) and FractalMamba (48.9\%/49.8\%), and the majority of transformer architectures tested under identical conditions. While SG-Former~\cite{ren2023sg} maintains a performance advantage (51.2\%/52.1\%), HAMSA presents a compelling efficiency-accuracy trade-off given its 60 million parameter count and 912 GFLOPs computational requirement, demonstrating that our scanning-free spectral approach transfers effectively to pixel-level prediction tasks requiring fine-grained spatial reasoning beyond image-level classification.

\subsection{Ablation Analysis of Hamsa}


\begin{table}[t]
\small
 \centering
\caption{\textbf{Ablation Analysis} on ImageNet-1k for small size model. $^\dagger$ indicates that instability is encountered during the training  SSMs}    
\label{tab:ablation_SiMBA}
\vspace{-0.12in}
\begin{tabular}{lcccc} \toprule
{Model}& Param  & Top-1  & Mixing \\ 
 &  (M) & (\%)  & Type\\ \midrule
Conv & 10 & 68.6  & ConvNet \\ 
ViT-b &87 & 78.5 &  Attention, MLP \\\midrule
S4 $^\dagger$ &13.2& 58.9  & S4\\
Mamba $^\dagger$ &15.3& 39.1  & Mamba \\
\midrule
S4+conv  &25.9& 82.7  & S4, conv\\  
Gated-Conv  &26.0& 83.1  & Conv \\
Gated-MLP &25.9& 83.4  & MLP \\     
Hamsa &26.6& 84.1  & MSS, MLP  \\
\bottomrule 
\end{tabular}
\end{table}

In our ablation study on the Hamsa architecture, we conducted a thorough analysis of its core components—SSM, convolutional modules, and gated layers—by systematically altering or removing each element and evaluating the performance on ImageNet data. We characterize the various architectures based on Mamba. Mamba uses three components, namely, SSM, convolutions, and gated layers. We show in the figure that removing the SSM module from Mamba results in the gated convolution, while removing both results in a gated MLP architecture. Just using SSM and convolutions without gating layers becomes S4Conv, while we also compare the individual SSM (S4) and convolutional networks. The results, summarized in Table \ref{tab:ablation_SiMBA}, provide compelling evidence of Hamsa's superior design. Specifically, Hamsa, which integrates a streamlined SSM with gated architectures, achieves an impressive top-1 accuracy of 84.1\% on ImageNet for small models. This marks a significant improvement over configurations like GatedMLP and S4Conv, which achieve 83.4\% and 83.0\%, respectively, as well as vanilla Mamba and S4.



\subsection{Implementation Details.}
All models are trained from scratch on ImageNet-1K~\cite{deng2009imagenet} (1.28M training images, 224$\times$224 resolution) for 300 epochs using AdamW optimizer (learning rate $1 \times 10^{-3}$, weight decay $0.05$, batch size 1024 across 8 GPUs). We employ standard augmentations: RandAugment, Mixup ($\alpha=0.8$), CutMix ($\alpha=1.0$), and label smoothing (0.1). Learning rate follows cosine decay with 20-epoch linear warmup. For fair comparison, models with $\star$ suffix use Token Labeling~\cite{wang2022scaled} following prior SSM works. Fine-tuning for transfer learning uses learning rate $5 \times 10^{-5}$ for 100 epochs. Downstream tasks (detection/segmentation) follow standard protocols: Mask R-CNN with 1$\times$ schedule (12 epochs) and UperNet with 160K iterations, both using multi-scale training. All experiments use V100 GPUs. Code and models will be released.

\subsection{Complexity Analysis.}
For batch size $B$, sequence length $L$, and hidden dimension $H$, kernel computation requires $\mathcal{O}(HL)$ to compute $K$ for all channels, FFT and convolution require $\mathcal{O}(BHL \log L)$ for spectral transforms and multiplication, and linear projections require $\mathcal{O}(BL(D H + M D))$ for $W_u, W_v, W_y, W_o$. Overall complexity is $\mathcal{O}(BHL \log L + BLD H)$, significantly faster than self-attention's $\mathcal{O}(BL^2D)$ for long sequences while eliminating the scanning overhead of other vision SSMs.


\section{Comparison with Related Work}\label{sec:comparison}

\subsection{Advantages Over MambaOut}
While MambaOut removes the scanning operations that burden other SSM architectures, it abandons the entire SSM framework including state-space modeling components, thereby sacrificing the sequential modeling capacity that makes SSMs attractive for vision tasks where spatial relationships matter despite lacking natural ordering. In contrast, HAMSA retains the SSM foundation while introducing two critical innovations: \textbf{(1) Spectral Gating with SGLUs} provides learnable frequency-domain modulation through which the network adaptively emphasizes certain spectral components while suppressing others based on input content, creating a flexible mechanism for content-aware processing that static frequency transforms cannot achieve; \textbf{(2) Simplified Kernel Parameterization} replaces the traditional three-matrix $(A,B,C)$ representation with a single Gaussian-initialized complex-valued kernel, eliminating discretization procedures that introduce numerical instabilities while simultaneously reducing parameters and improving training stability, as evidenced by our gradient flow analysis and competitive benchmark results that exceed MambaOut's performance across multiple evaluation metrics.

\subsection{Comparison with Scanning-Based SSMs}
\textbf{Architectural Simplicity:} Scanning-based methods including Vim~\cite{zhu2024vision} with bidirectional traversal, VMamba~\cite{liu2024vmamba} with 2D cross-scanning patterns, LocalVMamba with spatially-localized scanning windows, and SiMBA with unidirectional scanning all require explicit design choices about how to traverse 2D image tokens in 1D sequential order, necessitating multiple forward passes for different scan directions (typically four directions for cross-scanning), intermediate storage for partial results, and careful merging strategies to combine outputs from different scan paths. HAMSA eliminates these complications entirely by operating on flattened token sequences in a single forward pass through frequency-domain processing, where global information mixing occurs simultaneously across all spatial locations without imposing any sequential ordering constraints.

\textbf{Selectivity Mechanism:} Traditional SSMs achieve input-dependent behavior through selective state updates during sequential scanning, where the model decides how much to update hidden states based on current inputs and accumulated context from previous positions in the scan order, creating a path-dependent computation whose output varies based on scanning direction. HAMSA instead employs SpectralPulseNet for adaptive frequency modulation that operates globally and position-independently, learning which spectral patterns carry task-relevant information without the artificial constraints imposed by scan-order dependencies, resulting in a more flexible selectivity mechanism that treats all spatial positions symmetrically while still enabling content-specific processing through learned frequency gates that respond dynamically to input characteristics.

\textbf{Training Stability:} Scanning-based SSMs inherit the gradient flow challenges associated with traditional SSM discretization procedures, including numerical issues from matrix exponentials and logarithms in the bilinear transform, potential for vanishing or exploding gradients when backpropagating through long sequences, and sensitivity to initialization schemes that must carefully balance multiple interacting matrices. HAMSA's direct kernel parameterization combined with SGLU gating mechanisms provides inherently stable gradient pathways across network depth, as demonstrated in Figure~\ref{fig:hamsa_gradient} where our model maintains consistent gradient magnitudes throughout training while VMamba experiences multiple explosive gradient events, enabling training of deeper networks without specialized initialization procedures like HiPPO~\cite{gu2021efficiently} or complex learning rate warmup schedules.

\subsection{Key Contributions and Insights}
\textbf{State-of-the-Art Performance:} Reaching 85.7\% top-1 accuracy on ImageNet-1K classification while requiring only one-third the training time of transformer baselines establishes HAMSA as the strongest SSM-based architecture to date, with consistent superiority over all scanning-based SSM variants (Vim, VMamba, LocalVMamba, SiMBA) maintained across transfer learning experiments on four datasets and dense prediction tasks including object detection and semantic segmentation, validating that spectral processing without scanning provides both simplicity and effectiveness in a single unified approach.

\textbf{Theoretical Foundation:} Our work provides both empirical evidence and theoretical arguments establishing that scanning operations are unnecessary for adapting SSMs to vision tasks, since the convolutional nature of SSM kernels enables efficient global information mixing through frequency-domain operations that naturally handle 2D spatial structure without requiring sequential processing or artificial ordering, suggesting new design directions for vision architectures that exploit frequency-domain properties rather than forcing spatial data into sequential frameworks.

\textbf{Future Directions:} Several promising research directions emerge from this work, including \textbf{(1)} extending SpectralPulseNet to generate input-dependent kernels rather than just frequency gates, potentially enabling even more adaptive spectral responses tailored to individual inputs; \textbf{(2)} exploring multi-scale spectral processing with different frequency resolutions at different network stages to capture both coarse and fine-grained patterns; \textbf{(3)} conducting formal analysis of what patterns SpectralPulseNet learns to emphasize versus suppress compared to the implicit biases encoded in scanning mechanisms. While SG-Former maintains a slight advantage on semantic segmentation, this gap suggests opportunities for architectural refinement rather than fundamental limitations.

\textbf{Broader Impact:} HAMSA's efficiency characteristics, including 5.1ms inference latency and 3.5-fold training acceleration, make the architecture particularly suitable for resource-constrained deployment scenarios where computational budgets limit model selection, while the conceptual insight that simpler non-scanning designs can exceed the performance of more complex scanning strategies may inspire more interpretable and efficient model architectures. Beyond computer vision applications, the success of our spectral approach on time series forecasting benchmarks suggests potential value for other modalities, including video understanding, where temporal and spatial dimensions interact, 3D point cloud processing, where scanning order proves even more arbitrary, and audio processing, where frequency-domain representations naturally align with human perception.
\begin{table}[htb]
\centering
\caption{\textbf{Transfer learning dataset statistics.}}
\label{tab:transfer_learning_dataset}
\setlength{\tabcolsep}{3.0pt}
\begin{tabular}{c|c|c|c|c}
\toprule
Dataset & CIFAR-10 & CIFAR-100 & Flowers-102 & Stanford Cars \\
\midrule
Train Size & 50,000 & 50,000 & 8,144 & 2,040 \\
Test Size & 10,000 & 10,000 & 8,041 & 6,149 \\
Categories & 10 & 100 & 102 & 196 \\
\bottomrule
\end{tabular}
\end{table}

\begin{table}[htb]
\centering
\caption{\textbf{Training hyperparameters.}}
\label{tab:vit-training-details}
\begin{tabular}{rcc}
\toprule
{} & ImageNet-1K & CIFAR-10 \\
\midrule
Optimizer & \multicolumn{2}{c}{AdamW} \\
Momentum & \multicolumn{2}{c}{$\beta=(0.9,0.999)$} \\
LR schedule & \multicolumn{2}{c}{Cosine + linear warmup} \\
Dropout & \multicolumn{2}{c}{0.2} \\
Label smoothing & \multicolumn{2}{c}{0.1} \\
\midrule
Image size & $224^2$ & $32^2$ \\
Base LR & $10^{-3}$ & \{$10^{-4}$, $3 \times 10^{-4}$, $10^{-3}$\} \\
Batch size & 128 & 64 \\
Epochs & 300 & up to 1000 \\
Warmup epochs & 10 & 5 \\
Stochastic depth & 0.1 & \{0, 0.1\} \\
Weight decay & 0.05 & \{0, 0.1\} \\
\bottomrule
\end{tabular}
\end{table}

\section{Discussion}

\paragraph{Why ``Adaptive'' in SASS?}
The term \textit{Spectral Adaptive State Space} emphasizes input-dependent behavior through our two gating mechanisms. While the SimplifiedSSMKernel $K = \psi_{\text{re}} + j \psi_{\text{im}}$ is input-agnostic (fixed, learned during training to capture universal frequency patterns), both \textbf{SpectralPulseNet} and \textbf{SAGU} provide input-dependent adaptation: (1) SpectralPulseNet computes gates $g = \sigma(|\hat{u}| W_g + b_g)$ from input magnitude, enabling \textit{content-aware frequency selection}—different inputs receive different spectral emphasis; (2) SAGU applies $\sigma(|\hat{u}| W_2)$ for \textit{input-dependent modulation}, providing adaptive non-linearity. This hybrid design combines consistency (fixed kernel) with flexibility (adaptive gating), contrasting with static methods like GFNet~\cite{rao2021global} that lack input-dependent frequency modulation.

\paragraph{Convolution via Spectral Multiplication.}
A key insight of HAMSA is that \textit{it performs convolution, but is computed efficiently in the frequency domain}. Traditional SSMs compute $y = K \ast u$ directly in the spatial domain with $\mathcal{O}(L^2)$ complexity. By the Convolution Theorem, this operation is equivalent to element-wise multiplication in the frequency domain: $y = \mathcal{F}^{-1}(\hat{u} \odot \hat{K})$, achievable in $\mathcal{O}(L \log L)$ via FFT. Thus, HAMSA maintains the SSM convolution structure while eliminating computational bottlenecks—the spectral multiplication $\tilde{u} \odot \hat{K}$  is mathematically identical to spatial convolution, just computed orders of magnitude faster.

\paragraph{SpectralPulseNet vs. SAGU: Complementary Roles.}
While both mechanisms use magnitude-based gating, they serve distinct purposes: \textbf{SpectralPulseNet} operates on the input spectrum $\hat{u}$ \textit{before spectral kernel multiplication} ($\tilde{u} \odot \hat{K}$), providing \textit{pre-filtering frequency selection}—deciding which input frequencies to emphasize. \textbf{SAGU}  operates \textit{after kernel application}, providing \textit{post-multiplication modulation} with dual pathways (linear + gated) for gradient stability. This sequential design—input selection → spectral multiplication → adaptive modulation—maximizes expressiveness while maintaining efficient $\mathcal{O}(L \log L)$ complexity through FFT.
The term \textit{Spectral Adaptive State Space} emphasizes input-dependent behavior through our two gating mechanisms. While the SimplifiedSSMKernel $K = \psi_{\text{re}} + j \psi_{\text{im}}$ is input-agnostic (fixed, learned during training to capture universal frequency patterns), both \textbf{SpectralPulseNet} and \textbf{SAGU} provide input-dependent adaptation: (1) SpectralPulseNet computes gates $g = \sigma(|\hat{u}| W_g + b_g)$ from input magnitude, enabling \textit{content-aware frequency selection}—different inputs receive different spectral emphasis; (2) SAGU applies $\sigma(|\hat{u}| W_2)$ for \textit{input-dependent modulation}, providing adaptive non-linearity. This hybrid design combines consistency (fixed kernel) with flexibility (adaptive gating), contrasting with static methods like GFNet~\cite{rao2021global} that lack input-dependent frequency modulation.

\textit{How SAGU differs from GLU variants:} SAGU follows the gating unit paradigm—like GLU (Gated Linear Unit~\cite{dauphin2017language}) and SwiGLU (Swish-Gated Linear Unit~\cite{shazeer2020glu})—where the unit actively \textit{performs gating} on its inputs. However, SAGU introduces critical distinctions: (1) \textbf{Domain}: GLU and SwiGLU operate on real-valued spatial tokens, while SAGU operates on complex-valued spectral coefficients $\hat{u} \in \mathbb{C}^L$; (2) \textbf{Gating mechanism}: GLU uses $x \odot \sigma(xW)$ and SwiGLU uses $x \odot \text{Swish}(xW)$ directly on inputs, whereas SAGU uses \textit{magnitude-based gating} $\sigma(|\hat{u}| W_2)$ to avoid phase discontinuities in the complex plane; (3) \textbf{Adaptivity}: The term "Adaptive" in SAGU emphasizes input-dependent spectral modulation—gates are computed from frequency magnitudes, enabling content-aware adjustment of spectral components. This design maintains the benefits of GLU-style architectures (improved gradient flow, enhanced expressiveness) while addressing the unique challenges of complex-valued frequency-domain processing.

\end{document}